\documentclass{article}

\usepackage[preprint]{neurips_2026}

\usepackage[utf8]{inputenc} 
\usepackage[T1]{fontenc}    
\usepackage{hyperref}       
\usepackage{url}            
\usepackage{booktabs}       
\usepackage{amsfonts}       
\usepackage{nicefrac}       
\usepackage{microtype}      
\usepackage{xcolor}         
\usepackage{graphicx}      
\usepackage{amsmath}        
\usepackage{amssymb}        
\usepackage{amsthm}         
\usepackage{algorithm}      
\usepackage{algorithmic}    
\usepackage{multirow}       

\usepackage{subcaption}
\usepackage{pdflscape}  

\usepackage{wrapfig}
\usepackage{subcaption}

\newtheorem{proposition}{Proposition}
\newtheorem{lemma}{Lemma}

\title{Optimal Recourse Summaries via Bi-Objective Decision Tree Learning}

\author{
  Ioannis~Chatzis \\
  \texttt{johnco.chatzis@gmail.com} \\
  \And
  Jason~Liartis \\
  \texttt{jliartis@ails.ece.ntua.gr} \\
  \AND
  Athanasios Voulodimos \\
  \texttt{thanosv@mail.ntua.gr} \\
  \And
  Giorgos Stamou \\
  \texttt{gstam@cs.ntua.gr} \\
  \And
  \\
  Artificial Intelligence and Learning Systems Laboratory\\
  National Technical University of Athens, Greece
}

\begin{document}

\maketitle

\begin{abstract}
  Actionable Recourse provides individuals with actions they can take to change an unfavorable classifier outcome. While useful at the instance level, it is ill-suited for global auditing and bias detection, since aggregating local actions is costly and often inconsistent. Recourse Summaries address this limitation by partitioning the population and assigning one shared action per subgroup, enabling comparison across subgroups. Designing summaries involves a fundamental trade-off between recourse effectiveness and recourse cost, which existing methods do not adequately address. We introduce Summaries of Optimal and Global Actionable Recourse (SOGAR), which formulates recourse summary learning as an optimal decision tree learning problem and finds the Pareto front -- the complete set of solutions where improving one objective necessarily worsens the other. SOGAR enables post-hoc selection of the desired trade-off without retraining. Using shallow axis-parallel decision trees and sparse leaf actions, SOGAR produces stable, low-cost, and effective recourse summaries that outperform existing approaches across effectiveness and cost metrics.
\end{abstract}

\section{Introduction}
\label{sec:intro}

Automated decision-making systems based on opaque machine learning models are increasingly being applied in critical domains such as credit scoring, hiring, education, and healthcare decisions.
In such high-risk settings, high predictive accuracy is insufficient for stakeholders, as the inspection of the system's decisions is not only useful but often a legal requirement as well, e.g., by Article 86 of the EU AI Act.\footnote{\url{https://artificialintelligenceact.eu/article/86/}}
To address this, there are efforts both on the fronts of inherent interpretability \cite{rudin2019stop} and post-hoc interpretability \cite{ribeiro2016should,Lundberg_Lee_2017}.

A very popular post-hoc method is \emph{counterfactual explanations} which specify the changes to the input features that would produce a different classifier output \cite{wachter2018counterfactualexplanationsopeningblack}.
Counterfactual explanations are very interpretable and often preferred by humans as they mimic their thinking process  \cite{guidotti2018survey}.
Ustun et al. reframe the problem around an individual who needs to take actions to change the unfavorable decision of an automated system, terming this setting \emph{actionable recourse} \cite{Ustun_Spangher_Liu_2019}.

\emph{Global counterfactual explanations} and \emph{recourse summaries} extend these explanations to provide insight for an entire population.
Global counterfactual explanations try to identify a small set of low-cost actions that effectively change the classifier's output with respect to a reference population \cite{Ley_Mishra_Magazzeni_2023, kavouras2025glanceglobalactionsnutshell}.
Recourse summaries further identify population subgroups and assign one action to each subgroup, which is especially helpful for identifying biases by comparing the optimal action for each subgroup \cite{rawal2020beyond, cet_Kanamori_Takagi_Kobayashi_Ike}.
Both approaches try to balance a set of objectives that often include the following: \textbf{(i)} Coverage, the portion of the population that is assigned with actions for recourse, \textbf{(ii) }Recourse cost, the estimated effort of enacting the assigned recourse, \textbf{(iii) }Recourse loss, the portion of the population whose recourse successfully flips the classifier's output. Also referred to as ineffectiveness or incorrectness, \textbf{(iv)} Action sparsity, the number of input features affected by the assigned recourse, \textbf{(v)} Subgroup count, \textbf{(vi)} Subgroup description length, how many features are used to specify each subgroup.
Not all of these objectives are sufficiently addressed by existing works.
Some provide limited coverage \cite{rawal2020beyond}, some prioritize loss over cost \cite{kavouras2025glanceglobalactionsnutshell}, others scalarize the objectives with arbitrary coefficients \cite{rawal2020beyond, cet_Kanamori_Takagi_Kobayashi_Ike}, and none solve their provided formulation to a \emph{global optimum}.

\begin{wrapfigure}{r}{0.5\textwidth}
    \centering
    \includegraphics[width=\linewidth]{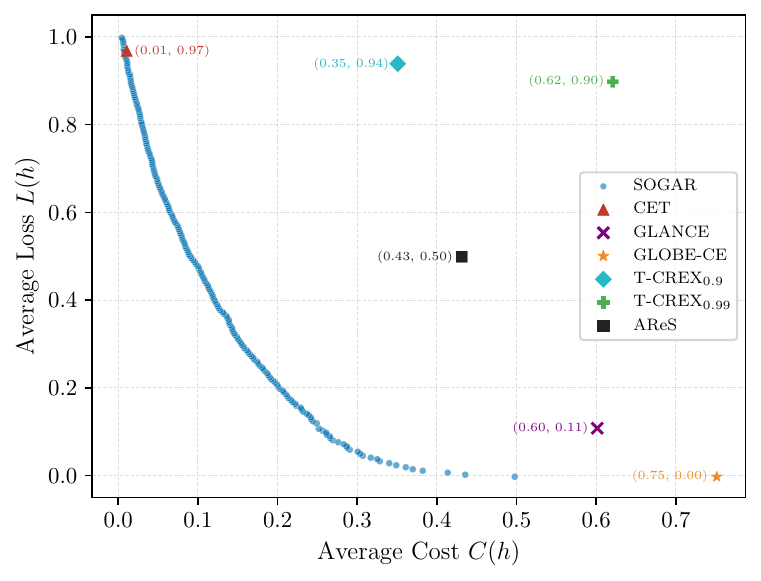}
        \caption{SOGAR vs.\ baselines on Employee Attrition, LightGBM classifier, fold 1 of the 10-fold cross validation.
                 Closest to $(0,0)$ is better. Blue dots show 
                 the full SOGAR trade-off; other methods produce 
                 a single solution each.}
        \label{fig:pareto_intro}
\end{wrapfigure}

To address this, we introduce \emph{Summaries of Optimal and Global Actionable Recourse (SOGAR)}, which formulates recourse summary learning as a bi-objective optimal decision tree problem, the objectives being the recourse cost and loss.
Each summary is represented as a shallow, axis-parallel decision tree that partitions the affected population, with one action assigned per leaf.
We produce a small number of groups with simple descriptions and sparse actions by imposing thresholds on the number of leaves, tree depth, and action sparsity.
Under the separable conditions of the STreeD framework \cite{streed_VAN_2023}, this yields solutions that guarantee global optimality over the specified objectives and constraints.
Rather than committing to one single scalarization of cost and effectiveness, SOGAR produces the full Pareto front of non-dominated summaries for a fixed maximum depth and number of leaves, exposing the complete trade-off between the objectives.
Since finding optimal decision trees is NP-hard \cite{hyafil1976optimal}, it also offers early termination, returning the best solutions found within the allocated time budget.

Our contributions as summarized are the following:
\begin{itemize}
    \item We define SOGAR as a bi-objective optimal decision tree learning problem for actionable recourse summaries that jointly optimizes recourse effectiveness and action cost under tree size and action sparsity constraints.
    \item We cast SOGAR as an optimization task within the dynamic programming framework STreeD, prove it is exactly solvable, compute the full non-dominated Pareto front, and further accelerate the leaf-action evaluation by introducing CPU and GPU parallelization.
    \item We evaluate SOGAR against recourse-summary and global counterfactual explanation baselines on benchmark tabular datasets and show that the exact optimization recovers additional non-dominated summaries, thereby improving the cost-effectiveness trade-off.  
    \item We validate SOGAR's auditing capabilities by recovering the gender-based bias of the Adult dataset \cite{adult_2} in the Pareto front.
\end{itemize}

The remainder of the paper is organized as follows. In Section \ref{subsec:related_work} we present the related literature. In Section \ref{sec:problem_statement} we provide the preliminaries and formalize recourse summaries and the SOGAR objective.
In Section \ref{sec:proposed_method} we present the end-to-end formulation of our framework.
Finally, in Section \ref{sec:experiments} we compare our method against 5 other baselines, across 4 datasets of different sizes and discuss the results.

\section{Related Work}
\label{subsec:related_work}

\paragraph{Counterfactual Explanations and Algorithmic Recourse.}
Counterfactual Explanations (CE) and Algorithmic Recourse (AR) are two similar explainability techniques, that given a classifier and an input instance, find an alternative instance that changes the classifier's output.
CEs minimize a notion of distance between the original and the alternative instances \cite{wachter2018counterfactualexplanationsopeningblack,Chou_Moreira_Bruza_Ouyang_Jorge_2022}.
AR, being more human-centric, frames the problem as providing individuals who have been treated unfavorably by an automated decision-making system with actions they can take to alter the decision, and minimizes the burden of enacting the recourse
\cite{Ustun_Spangher_Liu_2019, Karimi_Scholkopf_Valera_2021}.
AR typically imposes constraints on the proposed actions, such as actionability, feasibility, proximity and sparsity of edits.

\paragraph{Global Counterfactual Explanations and Recourse Summaries.}
Global Counterfactual Explanations and Recourse Summaries are directly situated in the global setting, identifying a small set of edits or actions that minimize average recourse cost and loss --- the frequency with which the assigned recourse fails to flip the classifier's label.
The first work along those lines was AReS, which uses itemset mining to find two-level population subgroups and assigns one action to each subgroup while optimizing for recourse coverage, cost, loss, and interpretability \cite{rawal2020beyond}.
AReS has two major limitations: high running times and inconsistency --- some individuals may be assigned multiple actions while others receive none. T-CREx adopts the tree structure to assign counterfactuals to individuals, but prioritizes solution sparsity over effectiveness and does not impose actionability constraints \cite{bewley2024counterfactual}. 
GLOBE-CE takes a different approach: rather than finding specific actions that flip the classifier's output, it identifies fixed directions that can be added to inputs with variable magnitude \cite{Ley_Mishra_Magazzeni_2023}.
GLANCE begins by clustering the population and generating a set of candidate actions for each cluster \cite{kavouras2025glanceglobalactionsnutshell}.
It then merges clusters that are similar in feature or action space and selects the optimal action for each, prioritizing coverage, yielding a small final set of counterfactual actions.
CET is the work closest to ours \cite{cet_Kanamori_Takagi_Kobayashi_Ike}.
It partitions the population into subgroups using a decision tree structure, assigning one action to each leaf.
This ensures that an individual is assigned exactly one action.
CET uses stochastic local search to find good tree splits and  minimizes a linear combination of cost and ineffectiveness, configured by a hyperparameter $\gamma$, to assign actions to leaves.
Like AReS, CET has been criticized for excessive running times, which we confirm in our experiments in Section \ref{sec:experiments}.

SOGAR, our method, shares CET's tree-based format but differs in key respects.
Instead of stochastic local search, we find globally optimal solutions using dynamic programming.
Additionally, we do not linearize the objective; instead, we identify all Pareto-optimal solutions that improve either cost or ineffectiveness, allowing users to select the solution that best satisfies their needs.
This approach also eliminates the need to recompute solutions for different values of $\gamma$, if the initial solution proves unsatisfactory.
Although finding global optima is NP-hard, the STreeD framework \cite{streed_VAN_2023} proves much faster in practice through efficient caching and pruning, which we enhance with additional action-cost caches and a GPU implementation.

\paragraph{Optimal Decision Trees.}
Decision Tree learning is NP-hard \cite{hyafil1976optimal} and is most often approached with a greedy algorithm such as CART \cite{breiman1984classification} or C4.5 \cite{quinlan1993c4}.
Modern advancements in hardware and optimization algorithms have made finding the global optimum computationally feasible.
The first successful method utilized itemset lattices \cite{nijssen2007mining}, which was further developed \cite{aglin2020learning} and other approaches emerged using Mathematical Programming \cite{bertsimas2017optimal, aghaei2021strong}, Constraint Programming \cite{narodytska2018learning}, Branch and Bound search \cite{hu2019optimal, mctavish2022fast} and Dynamic Programming \cite{demirovic2022murtree, streed_VAN_2023}.
Most works are concerned with optimizing accuracy, but others allow optimization of more complex metrics such as F1-score \cite{Lin_Zhong_Hu_Rudin_Seltzer_2022} or other tasks altogether such as survival analysis \cite{zhang2024optimal} or clustering \cite{bertsimas2018interpretable}.
The STreeD framework in particular, introduced in \cite{streed_VAN_2023}, presents a set of necessary and sufficient conditions, which, when satisfied, allow for any task to be solved by their algorithm, which utilizes dynamic programming with caching and pruning.
We prove that finding recourse summaries is such a task, and we integrate it into the STreeD framework.
To our knowledge, SOGAR is the first work in the field of optimal decision trees to tackle actionable recourse or counterfactual explanations.

\section{Problem Setting}
\label{sec:problem_statement}

Here we introduce our notation and formulation of recourse summaries.

Throughout this paper, we assume that we are working in a classification setting, where a fixed classifier $f$ assigns labels from $\{0, 1\}$ to instances drawn from a set $\mathcal{X}$.
We assume that 1 is the positive, desired label and 0 is the negative, undesired label.
In multiclass settings we can simply designate which labels are desired and map those to 1 and the rest to 0.
We also assume that instances are $n$-dimensional vectors of any combination of real, categorical and binary features.
We denote the $j$-th feature of an instance by $x_j$.
We also assume that we have an action set $\mathcal{A}$ whose elements $a$ transform an instance $x$ to a new instance $a(x) \in \mathcal{X}$.
For example, $f$ could be a loan approval model, with ``approved'' being the desired class and ``rejected'' being the undesired class.
$\mathcal{X}$ could consist of features such as the applicant's age, income and requested loan amount, while an action could be ``Increase your income by \$5,000 and reduce the requested loan amount by \$20,000''.
Finally, for computational purposes, we assume that we have a binarization function $\phi: \mathcal{X} \rightarrow \{0, 1\}^{|\mathcal{F}|}$ where $\mathcal{F}$ is a set of binary features derived from thresholds on numerical features and encoding on categorical features.

\paragraph{Individualized Recourse.}
Let $c$ denote a function $c: \langle \mathcal{X}, \mathcal{A} \rangle \rightarrow \mathbb{R}_{\geq 0}$ that assigns a cost to an action $a$ for an instance $x$.
Individualized recourse refers to finding the minimal cost action that maps an instance $x$ to a positively classified instance, i.e.,
\begin{equation}
    \arg\min_{a \in \mathcal{A}} c(a,x), ~s.t.~ f(a(x)) = 1
\end{equation}

\paragraph{Recourse Summaries.}
Given a population of instances along with the labels assigned to them by the classifier $D = \{ \langle x^{(i)}, f(x^{(i)})\rangle\}_{i=0}^N$, the affected instances is the subset of instances assigned to the undesired class, $D_0 = \{ x^{(i)}: \langle x^{(i)}, f(x^{(i)}) \rangle \in D, ~f(x^{(i)}) = 0 \}$.
Assigning one action to the entire affected population, could lead to arbitrarily high costs when $D_0$ contains points far from the decision boundaries, as shown in Proposition 1 of \cite{cet_Kanamori_Takagi_Kobayashi_Ike}.

In order to keep recourse costs low, while assigning only a few actions to the entire population, \emph{recourse summaries}, as introduced in \cite{rawal2020beyond}, define population subgroups $X_k \subseteq \mathcal{X}, k=1,\dots, K$ and assign one action to each subgroup.
The objective is additionally relaxed by allowing an action to be ineffective for some instances of the affected population, but we measure the count of those instances as its \emph{recourse loss}:
\begin{align}
\label{eq:single_action_loss}
    l(a,x) &= \mathbb{I}[f(a(x)) = 0]\\
\label{eq:leaf_loss}
    l(a,X_k) &= \sum_{x \in X_k} l(a,x)
\end{align}
Where $\mathbb{I}$ is the indicator function.

Using the subgroups $X_k$ and the actions assigned to them $a_k$ we can treat the recourse summary as a mapping $h$ from instances to actions, such that $h(x) = a_k$ if $x \in X_k$.
Therefore, we can define the cost and the loss functions at the level of the recourse summary by aggregating over $D_0$:
\begin{align}
    C(h) &= \sum_{x \in D_0} c(h(x), x)
    \label{eq:global_cost}\\
    L(h) &= \sum_{x \in D_0} l(h(x), x)
    \label{eq:global_loss}
\end{align}

Increasing the number of subgroups can only improve $C$ and $L$, as is shown in Proposition 2 of \cite{cet_Kanamori_Takagi_Kobayashi_Ike}.
Despite this, we need the number of subgroups to be relatively small, otherwise it becomes hard to draw global insights.

For a fixed set of subgroups, there is a fundamental trade-off between the cost and the loss of the actions assigned to each subgroup.
To decrease the loss, we need to reduce the number of instances for which the assigned action is ineffective, thus the pool of available actions becomes smaller and the cost of the chosen action can only increase or stay the same.
As discussed in Section \ref{subsec:related_work}, prior work either solves for a linear combination of the objectives or enforces a threshold on one of them and optimizes the other and do not provide any optimality guarantees for their solutions.
SOGAR instead computes the full Pareto front of globally optimal solutions.
This avoids arbitrary thresholds and scalarization coefficients on the objectives and allows the end-user to select their ideal trade-off between cost and recourse loss.
This also avoids the issue of having to re-run the process for computing recourse summaries multiple times if the initial selection of objective coefficients and thresholds results in unsatisfactory solutions.
SOGAR is the first method to provide this guarantee for recourse summaries.

\paragraph{Recourse Summary Trees.}
A \emph{recourse summary tree} is a decision tree that defines a mapping $\tau: \mathcal{X} \rightarrow \mathcal{A}$.
Each internal node $v$ performs a test of the form $[x_{j(v)} \leq b_v]$.
If an instance passes the test, it proceeds recursively to the right child of $v$ and to the left child if it fails.
These splits induce a partition of $\mathcal{X}$ into a finite collection of disjoint subgroups $\{ X_k \}_{k \in \mathcal{L}}$, where $\mathcal{L}$ indexes the leaf nodes, and thus $\tau$ serves as a recourse summary.
Each leaf is labeled with an action $a_k \in \mathcal{A}$ and the induced decision rule is $\tau (x) = a_k$ if $x \in X_k$.
Trees with simple tests on features result in very simple subgroup definitions and by controlling their maximum leaf count and maximum depth we can control the subgroup count and their description length.

Given the pair of minimization objectives $(C, L)$, a solution $\tau$ is said to strictly dominate $\tau'$ if $C(\tau) \leq C(\tau')$, $L(\tau) \leq L(\tau')$ and at least one of the inequalities is strict, denoted $\tau \succ \tau'$.
Given a set of candidate recourse summary trees $\mathcal{T}$, such as all trees with a maximum depth of 3, the Pareto front $P$ of $\mathcal{T}$ is the set of non-dominated solutions:
\begin{equation}
    P = \{ \tau \in \mathcal{T} ~|~ \nexists \tau' \in \mathcal{T} ~(\tau' \succ \tau) \}
    \label{eq:pareto}
\end{equation}
Each point on the Pareto front represents a summary where improving cost would require sacrificing effectiveness, and vice versa.

\begin{proposition}[Optimality] \label{prop:optimality}
For any maximum depth $d$, the complete Pareto front of recourse summary trees, all trees that are not dominated by another tree of maximum depth $d$ in both cost $C$ and recourse loss $L$, can be computed exactly via dynamic programming.
\end{proposition}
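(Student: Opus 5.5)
Our plan is to verify the separability conditions of the STreeD framework \cite{streed_VAN_2023} for the bi-objective task $(C,L)$ and then appeal to its correctness theorem. Under those conditions, the dynamic program that recursively considers every binary feature split up to depth $d$ and merges the solution sets of the two children returns every Pareto-optimal solution. To set this up, we regard a recourse summary tree over the finite binarized feature set $\mathcal{F}$ as a recursive object: either a leaf labelled with an action $a\in\mathcal{A}$, or a split on some $f\in\mathcal{F}$ with two subtrees of depth at most $d-1$. We use the binarized splits because the continuous thresholds $b_v$ can be restricted to the finitely many thresholds encoded by $\phi$ without changing the induced partitions of the finite set $D_0$. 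We also take $\mathcal{A}$ to be finite, for example a finite candidate action pool subject to the sparsity constraint.

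\textbf{Step 1 (additive decomposition).} By \eqref{eq:global_cost}, \eqref{eq:global_loss} and \eqref{eq:leaf_loss}, $C$ and $L$ are sums over the instances of $D_0$, and each instance falls in exactly one leaf. Hence for a tree $\tau$ with root split on $f$ and subtrees $\tau_1,\tau_2$, acting on the data subsets $D_0^{f}$ and $D_0^{\bar f}$, we have
\begin{equation*}
(C,L)(\tau, D_0) = (C,L)(\tau_1, D_0^{\bar f}) + (C,L)(\tau_2, D_0^{f}).
\end{equation*}
Each leaf value depends only on its own data subset and action. The leaf subproblem is therefore the finite minimization that enumerates $a\in\mathcal{A}$ and returns the nondominated set of the points $(\sum_{x} c(a,x), \sum_x l(a,x))$.

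\textbf{Step 2 (order preservation / monotonicity).} This is the key STreeD condition. The combining operator is vector addition in $\mathbb{R}^2$, and dominance is the componentwise order. If $v_1 \preceq v_1'$, then $v_1+v_2 \preceq v_1'+v_2$, with strictness preserved. Consequently any globally Pareto-optimal tree has Pareto-optimal subtrees on its respective subsets: otherwise we could replace a dominated subtree by one that dominates it and obtain a tree that dominates the original. The Pareto front on a subset, at depth budget $k$, is thus contained in the nondominated filtering of
\begin{equation*}
\bigcup_{f} \{p_1+p_2 : p_1\in P(D^{\bar f},k-1),\ p_2\in P(D^{f},k-1)\} \;\cup\; P_{\text{leaf}}(D).
\end{equation*}
The reverse inclusion holds trivially after filtering. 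Induction on $k$ then proves that the recursion returns exactly the Pareto front.

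\textbf{Step 3 (constraints and finiteness).} The depth bound is built into the recursion. The leaf-count bound is handled by indexing subproblems by $(D,k,n)$ and splitting the node budget $n$ between the children, as STreeD does. Sparsity only restricts $\mathcal{A}$, which keeps the constraint leaf-local. All sets involved are finite, so the DP terminates. One detail needs care: trees with identical $(C,L)$ values must be handled by keeping one representative per objective vector, or all of them if the front is meant to contain every tree.

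The main obstacle we anticipate is Step 2 together with tie handling: the argument must show that no nondominated tree is lost when the children's fronts are pruned. The additivity and strict-monotonicity argument above should suffice, but the identification of thresholds with binarized features must be stated explicitly.
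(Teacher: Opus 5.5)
Your proof is correct and takes essentially the paper's route. The paper also casts the problem as a STreeD task, with data-subset states, zero branching cost, leaf cost $\bigl(\sum c(a,x), \sum l(a,x)\bigr)$, $\oplus$ as element-wise addition, $\succ$ as Pareto dominance and a trivial constraint. It then checks the Markov property and order preservation of $\oplus$ and invokes STreeD's Theorems 4.6 and 4.7; your induction in Step 2 is just an unpacked version of the latter.

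One side claim needs correcting. Restricting arbitrary thresholds $b_v$ to those encoded by $\phi$ preserves every induced partition of $D_0$ only if $\phi$ has a cut between each pair of consecutive distinct observed values. The paper's equal-width binning does not guarantee this. So the result should be read, as the paper implicitly does, over trees that split on $\mathcal{F}$, or else with $\phi$ chosen that fine.
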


The proof (Appendix~\ref{app:proof}) shows that recourse summary tree optimization is a \emph{separable} optimization task in the sense of \cite{streed_VAN_2023}, enabling the use of their dynamic programming framework, named STreeD, with depth and leaf count constraints for interpretability, and caching and pruning for efficient computation. STreeD uses the binarized features $\mathcal{F}$ for defining splits, but for accurately computing the objectives, we use the original features $\mathcal{X}$.
Thus, our method operates on the binarized dataset $D_B = \{ \langle \phi(x), x \rangle : x \in D_0 \}$ that contains pairs of (binarized, original) instances.
\section{Proposed Method: SOGAR}
\label{sec:proposed_method}

\begin{figure}[t!]
    \centering
    \newsavebox{\pipelinebox}
    \savebox{\pipelinebox}{\includegraphics[width=\linewidth]{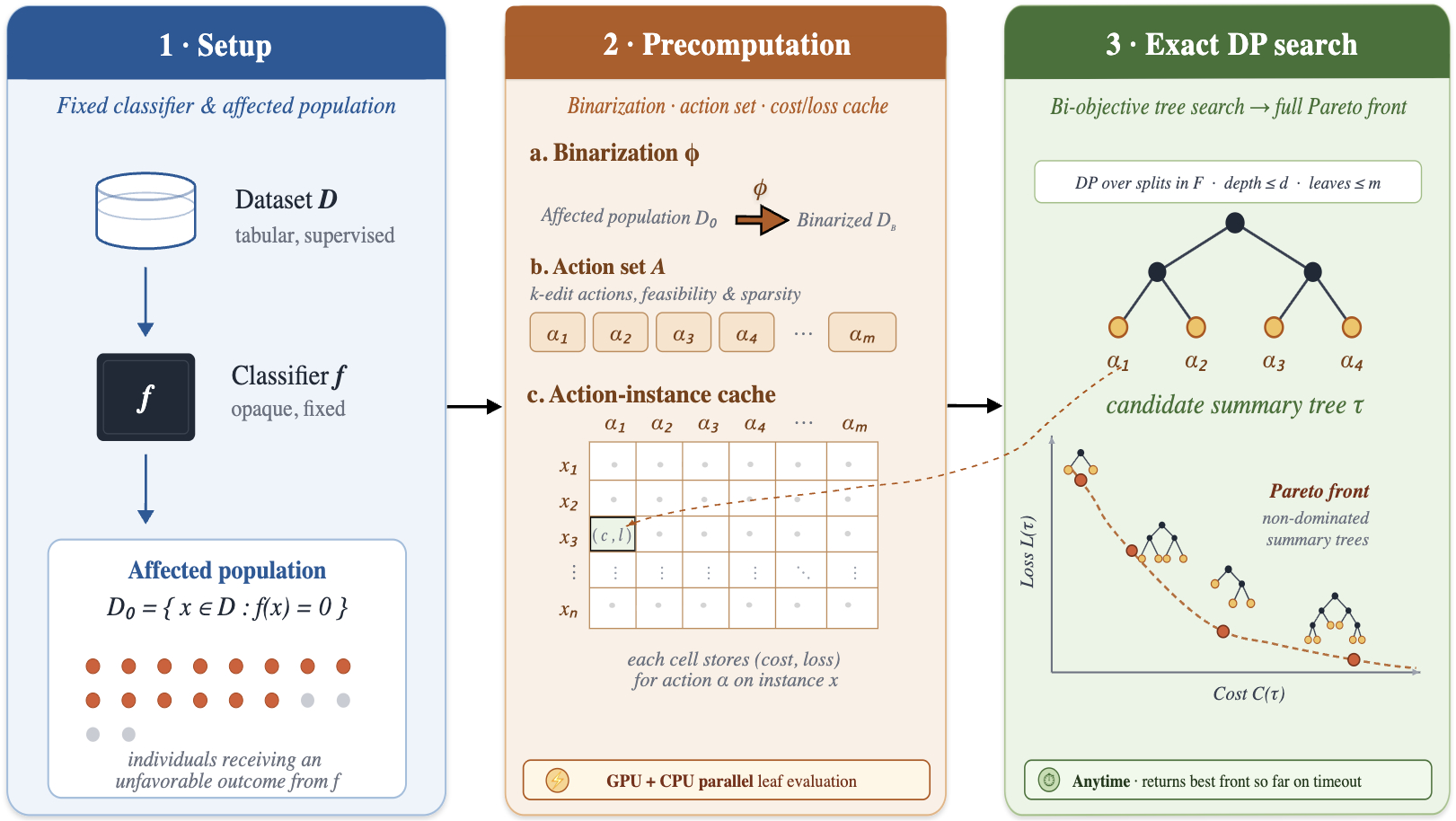}}
    \resizebox{\linewidth}{0.9\ht\pipelinebox}{\usebox{\pipelinebox}}
    
    \caption{The figure depicts the end-to-end pipeline of SOGAR. From left to right, the process starts by fixing a classifier \(f\), and retrieving the affected population that requires recourse, and immediately proceeding to the binarization process through \(\phi\). Then, the set of feasible shared actions \(\mathcal{A}\) is created across the entire population, which is used to compute the cache of cost and loss, for every action applied on every instance. Finally, the optimization task is solved using the dynamic programming framework of STreeD and the complete Pareto front of solutions that improve either loss or cost is computed.
}
    \label{fig:sogar_pipeline}
\medskip

\end{figure}

In this section, we present our method, SOGAR, as an end-to-end pipeline, under the formalization of Section \ref{sec:problem_statement}
which is visualized in Figure \ref{fig:sogar_pipeline}.

\paragraph{Action Set.}
SOGAR defines a single-edit action as one of the following operations.
Binary variables: Flipping the value. Categorical variables: Changing from one category to another.
Numerical variables: Increasing or decreasing the value by $n$ bins.
Single-edit actions can be pooled together to form $k$-edit actions.
The action set $\mathcal{A}$ contains all actions of up to $k$ edits, with the default value of $k$ being 3.
The user can also impose actionability constraints by specifying features for which no actions will be generated.

\paragraph{Cost \& Loss functions.}
As cost the function $c$, SOGAR uses the \emph{Maximum Percentile Shift (MPS)} function, similar to \cite{cet_Kanamori_Takagi_Kobayashi_Ike, Ustun_Spangher_Liu_2019}:
\begin{equation}
\label{eq:mps}
    c_{MPS}(a,x) = \max_{j} |Q_j(a(x)_j)-Q_j(x_j)|,
\end{equation}
where \(Q_j\) denotes the cumulative distribution function (CDF) of feature \(j\).
The cost function was chosen over a simple norm-based distance function because it is scale-invariant \cite{cet_Kanamori_Takagi_Kobayashi_Ike} and yields a realistic depiction of the effort required to enact the action by quantifying the difficulty of moving within a certain distribution.
Recourse failure is captured by the loss function defined in Equation \eqref{eq:single_action_loss}.

\paragraph{Cache computation.}
The action set \(\mathcal{A}\) is used to create the cache, where the cost and loss are precomputed for every action applied on every affected instance. Directly evaluating \(c_{MPS}\) and the classifier output of every transformed input \(a(x)\) inside tree search is computationally prohibitive because of the combinatorial nature of the problem, where the same instance-action pairs are reused across many candidate leaves. Thus, the \(O(1)\) look-ups of cached cost-loss pairs preserve efficiency.

\paragraph{Binarization.}
We binarize features to obtain the feature set \(\mathcal{F}\), which is required by the STreeD framework to search over all possible variable splits. Categorical features are one-hot encoded
and continuous features or integer features are discretized using equal-width bins. After the tree search algorithm terminates, the binary splits can be back-translated into simple, interpretable equality and threshold tests on the original variables.

\paragraph{Optimization.}
Using the cached costs and losses, we can formulate recourse summary learning as an optimization task in the STreeD framework.
The objective function of the optimization task is the cost-loss pair $C(\tau), L(\tau)$ and the dominance operator $\succ$ is element-wise comparison as outlined in Section \ref{sec:problem_statement}.
We also use constraints on the maximum tree depth $d$, and maximum leaf count $m$, to keep the summaries small and interpretable.
A detailed formulation of the task is also given in Appendix \ref{app:proof}.
STreeD proceeds to solve this optimization problem, returning the full Pareto front of non-dominated solutions, as defined in Equation \eqref{eq:pareto}.
For a given maximum depth $d$, STreeD performs bottom-up dynamic programming, recursing on $d$.
It utilizes caching of partial solutions and strong bounds to prune the search space, dramatically reducing the number of subproblems evaluated.
We further improve the scalability of the algorithm by implementing multi-thread CPU and GPU acceleration as outlined in the following sections.
Even though the nature of the task is combinatorial, the caching and pruning of STreeD, along with our cost-loss cache and parallelization, allows the problem to be solved to global optimality within a reasonable time interval.
Finally, a timeout can be imposed, after which STreeD returns all non dominated solutions it has found thus far, without guaranteeing that the are contained in the true Pareto front.
Our experiments in Appendix \ref{app:timeouts} show that, even with a limited computational budget, SOGAR can recover near-optimal solutions.

\paragraph{Leaf Acceleration.}
Every time a subproblem is solved, for each leaf, STreeD needs to evaluate each action on the individuals reaching that leaf.
Aggregating the cost and loss of these individuals for some action is an embarassingly parallelizable task.
We speed up the solver by implementing multi-thread CPU and GPU computation of these aggregates, significantly reducing running times.
We provide a comparison of running times with and without GPU acceleration in Appendix \ref{app:hardware_acceleration}.

\section{Experiments}
\label{sec:experiments}

In this section, we present our setup and introduce our comparison metrics. We present experiments across four tabular datasets and compare the performance of the baselines with our method using evaluation scores and runtime efficiency. We also perform an experiment showcasing how SOGAR can be used to audit classifiers for bias. Finally, we present our observations and analyze the Pareto fronts generated, discussing the generalization and usefulness of Pareto set solutions to auditors.

\begin{wrapfigure}{r}{0.5\textwidth}
    \centering
    \includegraphics[width=\linewidth]{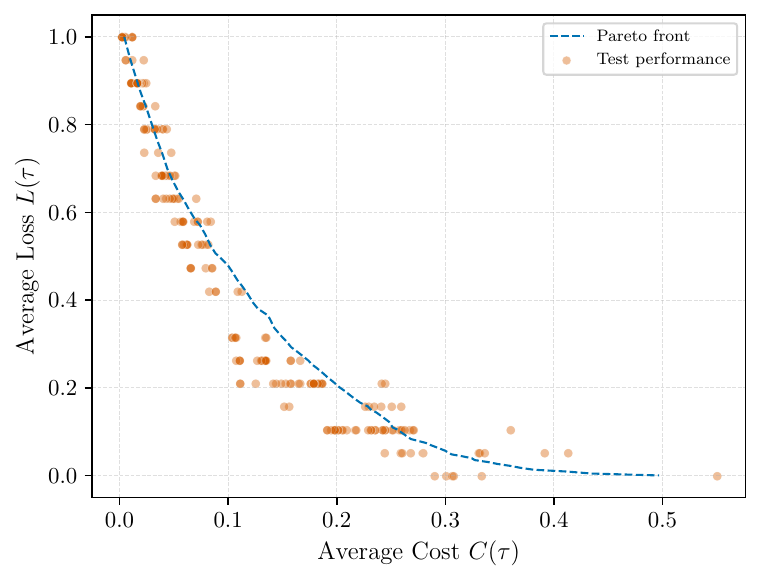}
    \caption{Pareto front on Employee Attrition (LightGBM, depth~3).
             Blue dashed line: train Pareto front. Orange dots:
             held-out test metrics. Average Euclidean distance: $0.04 \pm 0.02$ .}
    \label{fig:pareto_exp}
\end{wrapfigure}

\subsection{Experimental Setup}

Experiments presented in this section were conducted on a machine with i9-14900KS CPU (24 cores), 128GB RAM, and RTX 4090 GPU (24GB VRAM). The SOGAR optimization task is implemented as a STreeD task in C++17. The remainder of the implementation, including pre-processing, binarization, and cache computation, was all implemented in Python 3.13. Total compute time was 150\,h in main experiments (13.6\,h SOGAR, 136.5\,h rest baselines ), and an additional 67.1\,h in ablation studies in Appendices \ref{app:timeouts}--\ref{app:ablations}. 

\paragraph{Datasets.} We examine four tabular datasets, appropriate for auditing the classifiers. These are the Employee Attrition\footnote{Database Contents License (DbCL) v1.0} \cite{Kaggle_IBM_HR_Analytics_2017}, the German Credit\footnote{\label{foot:cc-by-4} CC BY 4.0} \cite{statlog_german_credit_data_144}, the Bank Marketing\footnotemark[\value{footnote}] \cite{bank_marketing_222}, and the Adult Income\footnotemark[\value{footnote}] \cite{adult_2} datasets. The Attrition and German Credit datasets are small to medium in size, with 1,400 and 1,000 instances, respectively, while the Bank Marketing\footnote{`additional' variant obtained by https://archive.ics.uci.edu/dataset/222/bank+marketing} and the Adult datasets are large in size with 4,500 and 50,000 instances, respectively. For our method and the ones viable to control the actionability of features, we constrain sensitive features as immutable, such as Gender, Age, or Race of individuals.

\paragraph{Classifiers.} The predictive models we used for auditing across all our experiments are two state-of-the-art tree-ensemble classifiers, LightGBM \cite{ke2017lightgbm} and XGBoost \cite{xgboost}, and a simple Deep Neural Network classifier.
The parameters were set to default values for every model and dataset, as the objective was model auditing, not classification accuracy.
We further discuss the models chosen and the parameters set for each architecture in Appendix \ref{app:experiments}.

\paragraph{Baselines.}
We compare to 5 prior baselines, these of CET \cite{cet_Kanamori_Takagi_Kobayashi_Ike}, AReS \cite{rawal2020beyond} , GLOBE-CE \cite{Ley_Mishra_Magazzeni_2023}, GLANCE \cite{kavouras2025glanceglobalactionsnutshell} and T-CREx \cite{bewley2024counterfactual}. For each baseline, we run the experiments using the default optimal parameters set by the authors, as indicated in each experiment.

\paragraph{Evaluation Metrics.}
To demonstrate the direct comparison with baseline methods, we report the recourse cost based on the MPS function defined in Equation \eqref{eq:mps}, and evaluate the effectiveness of recourse based on the recourse loss defined in Equation \eqref{eq:single_action_loss}. Finally, we adopt the sum of the two metrics as the invalidity score used by \cite{cet_Kanamori_Takagi_Kobayashi_Ike}\footnote{The invalidity score defined by \cite{cet_Kanamori_Takagi_Kobayashi_Ike} is a scaled sum of cost and loss using a \(\gamma\) coefficient on the loss function. In their experiments, they explicitly set \(\gamma =1\), thus, we do the same in the reported metrics.}.
The invalidity metric, which equally weighs cost and loss, serves as a tie-breaker for solutions which do not Pareto-dominate one another.

\subsection{Quantitative Comparison}
\label{subsec:quantitative_results}

Table \ref{tab:results_summary} presents the comparison of SOGAR to the other baseline methods on the 4 aforementioned datasets. 
The metrics we report are the recourse cost, recourse loss, invalidity, and computational time.
We perform 10-fold cross validation on each dataset and report the averaged metrics.
CET, AReS, and SOGAR are computationally heavy methods, so a time limit of 1 hour was imposed, which CET and AReS exceeded in some datasets.
SOGAR never reaches this time limit
and recovers all Pareto-optimal solutions.

SOGAR always outperforms all other methods on invalidity.
It also has the lowest cost in every dataset besides Attrition.
The methods that most often outperform SOGAR in terms of loss are GLOBE-CE and GLANCE, but even in these cases SOGAR's loss is often a close 2nd or 3rd.

SOGAR's running time is higher than GLOBE-CE and GLANCE, but it produces the entire Pareto front in a single run (e.g., $\approx$6,000 solutions on Adult at $\approx$0.2s/solution), eliminating the need for repeated runs under different hyperparameter settings. Additionally, timeouts can be imposed with minimal quality degradation, as shown in Appendix~\ref{app:timeouts}. Ablation studies on tree depth, action sparsity, binning granularity, and timeout sensitivity are provided in Appendices~\ref{app:timeouts}--\ref{app:ablations}, and standard deviations across folds are reported in Table~\ref{tab:results_summary_std} of 
Appendix~\ref{app:experiments}. 

\newcommand{\bad}[1]{\textcolor{red}{#1}}
\newcommand{\best}[1]{\textbf{#1}}

\begin{table}[t!]
\centering
\caption{Evaluating SOGAR against competing methods on 4 datasets and 3 classifiers with metrics averaged over 10 folds. Metrics reported are recourse cost (MPS), recourse loss, invalidity (their sum), and computational time in seconds. Lower is better for all metrics. Out of all solutions produced by SOGAR, metrics are reported for the one with the lowest invalidity. A dash indicates that the algorithm runtime exceeded the 1 hour timeout, and no metrics were collected.}
\label{tab:results_summary}
\large
\setlength{\tabcolsep}{2pt}
\renewcommand{\arraystretch}{0.4}

\resizebox{\textwidth}{!}{%
\begin{tabular}{ll cccc cccc cccc cccc}
\toprule
\textbf{Models} & \textbf{Algorithms} &
\multicolumn{4}{c}{\textbf{Attrition}} &
\multicolumn{4}{c}{\textbf{German}} &
\multicolumn{4}{c}{\textbf{Bank}} &
\multicolumn{4}{c}{\textbf{Adult}} \\
\cmidrule(lr){3-6}\cmidrule(lr){7-10}\cmidrule(lr){11-14}\cmidrule(lr){15-18}
 &  &
\textit{cost} $\downarrow$ & \textit{loss} $\downarrow$ & \textit{inv.} $\downarrow$ & \textit{time(s)} $\downarrow$ &
\textit{cost} $\downarrow$ & \textit{loss} $\downarrow$ & \textit{inv.} $\downarrow$ & \textit{time(s)} $\downarrow$ &
\textit{cost} $\downarrow$ & \textit{loss} $\downarrow$ & \textit{inv.} $\downarrow$ & \textit{time(s)} $\downarrow$ &
\textit{cost} $\downarrow$ & \textit{loss} $\downarrow$ & \textit{inv.} $\downarrow$ & \textit{time(s)} $\downarrow$ \\
\midrule

\multirow{7}{*}{DNN}
& CET        & 0.30 & 0.10 & 0.40 & 198.7 & 0.04 & 0.44 & 0.48 & 255.3 & 0.27 & 0.60 & 0.86 & 378.9 & -- & -- & -- & -- \\
& AReS       & 0.23 & 0.23 & 0.46 & 139.3 & 0.17 & 0.39 & 0.56 & 1,845.3 & -- & -- & -- & -- & -- & -- & -- & -- \\
& GLOBE-CE   & 0.81 & $<\!$0.01 & 0.81 & 7.6 & 0.91 & 0.01 & 0.92 & 7.58 & 0.87 & 0.04 & 0.89 & 23.2 & 0.98 & 0.00 & 0.98 & 170.5 \\
& GLANCE     & 0.54 & $<\!$0.01 & 0.54 & 11.66 & 0.47 & $<\!$0.01 & 0.47 & 8.5 & 0.65 & $<\!$0.01 & 0.66 & 10.75 & 0.96 & 0.00 & 0.96 & 47.22 \\
& T-CREx$_{0.9}$  & 0.35 & 0.77 & 1.13 & 7.0 & 0.53 & 0.78 & 1.31 & 32.19 & -- & -- & -- & -- & 0.66 & 0.91 & 1.58 & 15.04 \\
& T-CREx$_{0.99}$ & 0.56 & 0.65 & 1.21 & 176.32 & 0.60 & 0.70 & 1.25 & 19.63 & -- & -- & -- & -- & 0.64 & 0.86 & 1.50 & 10.4 \\
& \best{SOGAR} & \textbf{0.05} & \textbf{$<\!$0.01} & \textbf{0.05} & 250.0 & \textbf{0.02} & \textbf{0.00} & \textbf{0.02} & 28.2 & \textbf{0.17} & 0.29 & \textbf{0.46} & 268.9 & \textbf{0.03} & 0.03 & \textbf{0.06} & 1,248.4 \\
\midrule

\multirow{7}{*}{XGBoost}
& CET        & 0.40 & 0.17 & 0.57 & 342.6 & 0.12 & 0.25 & 0.37 & 151.2 & 0.51 & 0.24 & 0.75 & 438.8 & -- & -- & -- & -- \\
& AReS       & \textbf{0.02} & 0.97 & 0.99 & 11.68 & 0.41 & 0.13 & 0.55 & 2,362.8 & -- & -- & -- & -- & -- & -- & -- & -- \\
& GLOBE-CE   & 0.75 & \textbf{0.00} & 0.75 & \textbf{5.90} & 0.92 & 0.25 & 1.16 & \textbf{5.5} & 0.91 & \textbf{0.01} & 0.92 & 30.91 & 0.98 & \textbf{0.00} & 0.98 & 205.2 \\
& GLANCE     & 0.56 & 0.02 & 0.58 & 11.70 & 0.50 & \textbf{$<\!$0.01} & 0.50 & 8.4 & 0.69 & 0.03 & 0.72 & 10.6 & 0.96 & $<\!$0.01 & 0.96 & 35.1 \\
& T-CREx$_{0.9}$  & 0.33 & 0.72 & 1.10 & 7.99 & 0.40 & 0.75 & 1.20 & 8.2 & 0.32 & 0.99 & 1.31 & 6.87 & 0.70 & 0.90 & 1.46 & 12.52 \\
& T-CREx$_{0.99}$ & 0.48 & 0.64 & 1.12 & 14.13 & 0.56 & 0.57 & 1.13 & 8.03 & -- & -- & -- & -- & 0.35 & 0.99 & 1.13 & 10.23 \\
& \best{SOGAR} & 0.23 & 0.02 & \textbf{0.26} & 254.1 & \textbf{0.09} & 0.02 & \textbf{0.12} & 26.4 & \textbf{0.21} & 0.26 & \textbf{0.47} & 250.7 & \textbf{0.30} & 0.34 & \textbf{0.64} & 763.0 \\
\midrule

\multirow{7}{*}{LightGBM}
& CET        & \textbf{0.26} & 0.58 & 0.84 & 286.7 & 0.13 & 0.31 & 0.44 & 173.8 & 0.60 & 0.01 & 0.61 & 383.9 & -- & -- & -- & -- \\
& AReS       & 0.44 & 0.44 & 0.88 & 307.4 & 0.19 & 0.65 & 0.84 & 680.4 & 0.36 & 0.80 & 1.15 & 213.7 & -- & -- & -- & -- \\
& GLOBE-CE   & 0.79 & \textbf{0.01} & 0.80 & \textbf{7.96} & 0.91 & 0.23 & 1.14 & \textbf{6.1} & 0.92 & \textbf{$<\!$0.01} & 0.92 & 29.8 & 0.97 & \textbf{0.00} & 0.97 & 171.6 \\
& GLANCE     & 0.62 & 0.08 & 0.70 & 12.17 & 0.52 & \textbf{$<\!$0.01} & 0.53 & 9.3 & 0.71 & \textbf{$<\!$0.01} & 0.71 & \textbf{11.0} & 0.93 & $<\!$0.01 & 0.93 & \textbf{32.0} \\
& T-CREx$_{0.9}$  & 0.37 & 0.80 & 1.16 & 7.08 & 0.44 & 0.72 & 1.16 & 8.25 & 0.38 & 0.66 & 1.04 & 6.83 & 0.28 & 0.96 & 1.24 & 10.88 \\
& T-CREx$_{0.99}$ & 0.58 & 0.84 & 1.42 & 112.67 & 0.54 & 0.61 & 1.15 & 10.0 & 0.47 & 0.58 & 1.04 & 8.71 & 0.45 & 0.99 & 1.44 & 10.5 \\
& \best{SOGAR} & 0.30 & 0.07 & \textbf{0.37} & 356.4 & \textbf{0.11} & 0.03 & \textbf{0.14} & 41.9 & \textbf{0.19} & 0.06 & \textbf{0.25} & 166.2 & 0.35 & 0.17 & \textbf{0.52} & 1,235.3 \\
\bottomrule
\end{tabular}%
}
\end{table}

\subsection{Pareto Front and Generalization}

In Figure \ref{fig:pareto_exp}, the blue dashed line depicts the entire Pareto front of recourse summary trees of maximum depth 3, as computed by SOGAR on the training dataset.
The trade-off between the two metrics becomes explicitly quantified and an end-user can decide which point on the Pareto front best fits their needs.
We also plot metrics on a held-out portion of the same dataset to examine whether optimizing the objectives directly leads to over-fitting.
Each orange dot represents a SOGAR solution evaluated on this held-out dataset.
We can see that the orange dots are located very close to or on the dashed blue line, so there is little to no overfitting, with an average Euclidean distance of 0.04.

\subsection{Bias Auditing Experiment}
\label{subsec:bias_auditing_experiment_main}

\begin{figure}[t!]
    \centering
    \captionsetup[subfigure]{skip=2pt}
    \captionsetup{skip=3pt}
    \begin{subfigure}[t]{0.48\linewidth}
        \centering
        \includegraphics[width=\linewidth]{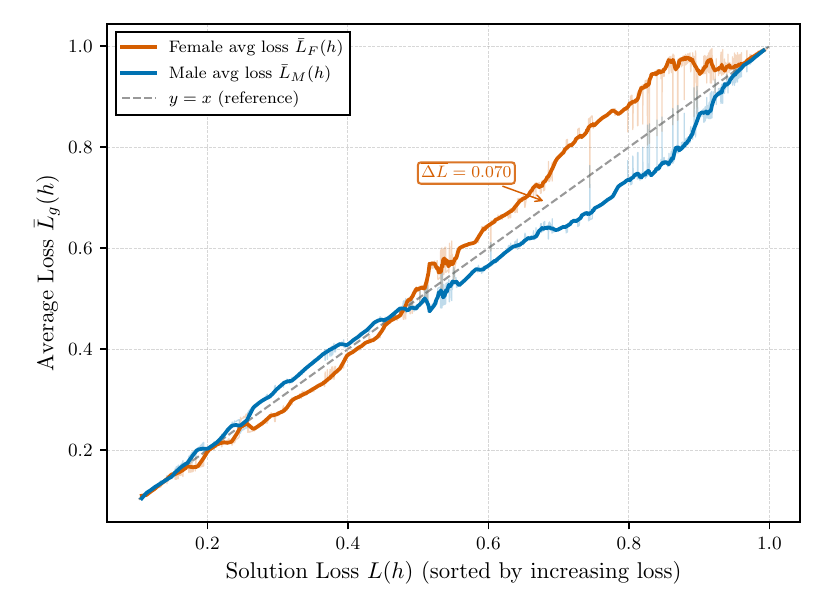}
        \caption{Loss ($\Delta L = 0.070$).}
        \label{fig:inst_loss_immutable}
    \end{subfigure}
    \hfill
    \begin{subfigure}[t]{0.48\linewidth}
        \centering
        \includegraphics[width=\linewidth]{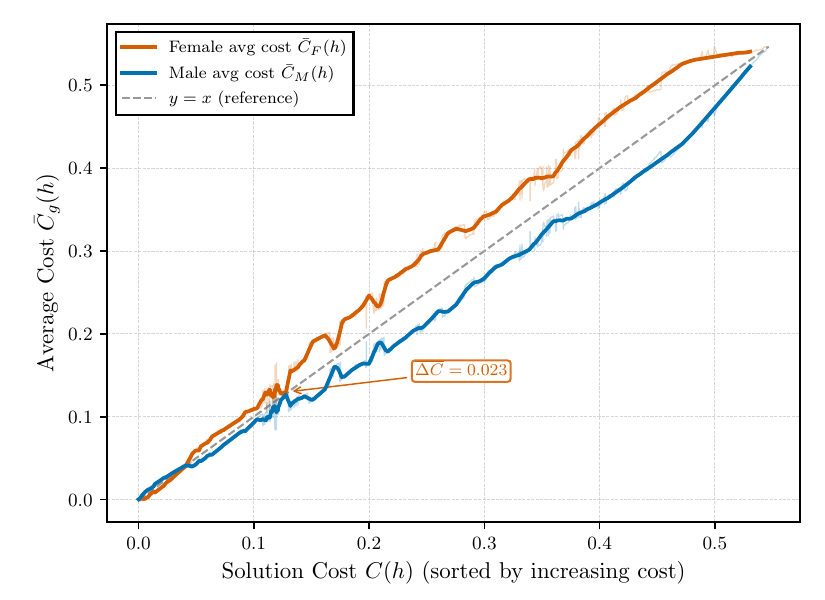}
        \caption{Cost ($\Delta C = 0.023$).}
        \label{fig:inst_cost_immutable}
    \end{subfigure}
    \caption{Loss and cost disparity across the Pareto front. The cost gap is persistent while the loss gap is more prominent.}
    \label{fig:loss_cost_disparity}
\end{figure}

\paragraph*{Setup} We run SOGAR on the Adult Income dataset~\cite{adult_2} (32,561 training instances) using the same LightGBM classifier and hyperparameters as the main experiments in Table~\ref{tab:results_summary} (depth~3, MaxNodes~7, MinLeaf~500). 
Grouping individuals by Sex, the classifier exhibits a Disparate Impact Ratio of $\mathbf{0.64}$, well below the $4/5$ 
threshold~\cite{watkins2024four} signifying a level of bias: 86.5\% of females receive adverse predictions versus 55.1\% of males.
Of the 19,199 adversely-predicted training instances, 43.8\% are female.
To further investigate the presence of bias we examine the differences between the recourse that females and males across the Pareto front of optimal recourse summary trees.
Per-group metrics are computed by routing every instance through all the Pareto front, assigning each instance the cost and loss of its landed leaf, then averaging per group and across the full front.

\paragraph{Findings.}
\begin{wrapfigure}{r}{0.5\linewidth}
    \vspace{-0.5em}
    \centering
    \includegraphics[width=\linewidth]{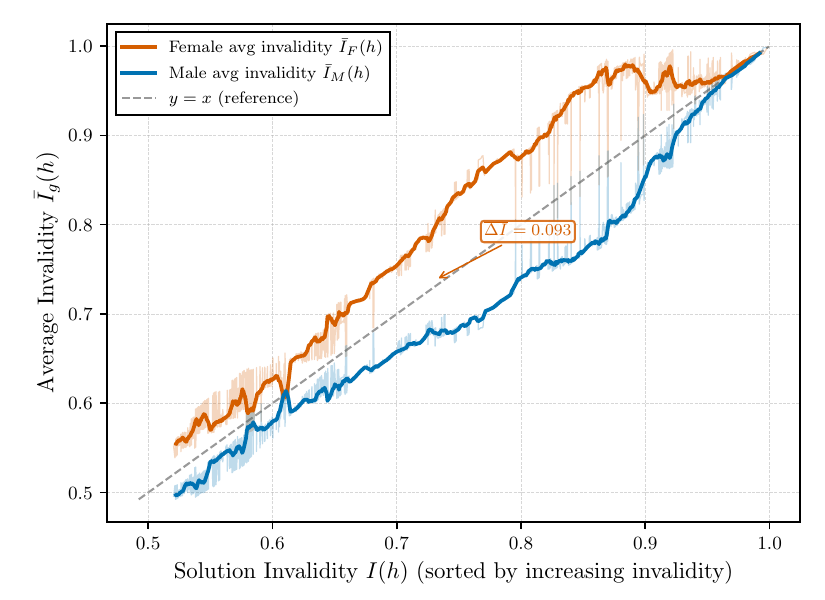}
    \caption{Invalidity ($\Delta I = 0.093$) across 
    the Pareto front. Female group average is 
    consistently above male across the entire front.}
    \label{fig:inst_inv_immutable}
    \vspace{-1em}
\end{wrapfigure}

SOGAR generates 4,142 Pareto-optimal solutions.
Across the front, \textbf{96.4\%} of solutions exhibit higher invalidity for females than males on training instances (mean gap $\Delta I = +0.093$; test instances: 95.9\%, $+0.081$). 
Figure~\ref{fig:inst_inv_immutable} shows that this gap persists across the entire invalidity range.
The decomposition in Figure~\ref{fig:loss_cost_disparity} reveals that cost is worse for females for almost of regions of the Pareto front, while loss is favorable towards females for low-loss regions.
Nevertheless, the loss disparities are wide enough in the high regions to contribute more to the total invalidity gap; $\Delta L = +0.070$, $\Delta C = +0.023$.
The invalidity disparity is widest above the mid-range and narrows toward both extremes of the front. Single-output methods such as GLOBE-CE~\cite{Ley_Mishra_Magazzeni_2023} and GLANCE~\cite{kavouras2025glanceglobalactionsnutshell} produce solutions in the low-loss and high-cost region where this gap is at its narrowest and would therefore understate the disparity that the full Pareto front exposes. Extended statistics are reported in Appendix~\ref{app:bias_audit_appendix}.

\section{Conclusion}
\label{sec:conclusion}

We introduced SOGAR, a method for computing recourse summaries by formulating the problem as bi-objective optimal decision tree learning.
Leveraging the STreeD dynamic programming framework, SOGAR produces the complete Pareto front of recourse summary trees, providing global optimality guarantees over both recourse cost and effectiveness.
Our experiments across four benchmark datasets demonstrate that SOGAR consistently outperforms existing recourse summary and global counterfactual explanation methods.
On the invalidity metric, which equally weights cost and loss, SOGAR achieves the best performance in all datasets.
The shallow, axis-parallel tree structure produces interpretable subgroup definitions, while the sparse actions assigned to each leaf remain actionable for affected individuals.
The diversity of the Pareto front solutions allows us to examine how recourse differs per group for different values of total cost or loss, something that single-solution methods might miss.

\section{Limitations}
\label{subsec:limitations}
We acknowledge several limitations of our approach.
First, SOGAR incurs higher computational costs than methods such as GLOBE-CE and GLANCE, though this is partially offset by producing an entire frontier of solutions rather than a single one.
For time-sensitive applications, the anytime property of our algorithm allows early termination with the best solutions found thus far, but the optimality guarantees are then lost.
Second, scalability to very large and high-dimensional datasets remains challenging.
The possible combinations of action-instance pairs get excessively large and the cache our method computes might exceed available hardware resources, and while STreeD's caching and pruning mitigate the combinatorial explosion of the search space, these datasets may require additional constraints or feature selection.
Finally, we point out that even though SOGAR can aid in auditing classifiers, it should not serve as the only criterion. Bias and fairness are complex multi-dimensional topics and multiple methods should be employed for a complete audit. Relying on just the results of SOGAR could potentially obscure bias hidden in complex relations between features or conflate representation bias within datasets with classifier bias.

\bibliographystyle{plain}

\bibliography{references}

\begin{thebibliography}{10}

\bibitem{aghaei2021strong}
Sina Aghaei, Andr{\'e}s G{\'o}mez, and Phebe Vayanos.
\newblock Strong optimal classification trees.
\newblock {\em arXiv preprint arXiv:2103.15965}, 2021.

\bibitem{aglin2020learning}
Ga{\"e}l Aglin, Siegfried Nijssen, and Pierre Schaus.
\newblock Learning optimal decision trees using caching branch-and-bound search.
\newblock In {\em Proceedings of the AAAI conference on artificial intelligence}, volume~34, pages 3146--3153, 2020.

\bibitem{adult_2}
Barry Becker and Ronny Kohavi.
\newblock Adult.
\newblock UCI Machine Learning Repository, 1996.
\newblock {DOI}: https://doi.org/10.24432/C5XW20.

\bibitem{bertsimas2017optimal}
Dimitris Bertsimas and Jack Dunn.
\newblock Optimal classification trees.
\newblock {\em Machine Learning}, 106(7):1039–1082, July 2017.

\bibitem{bertsimas2018interpretable}
Dimitris Bertsimas, Agni Orfanoudaki, and Holly Wiberg.
\newblock Interpretable clustering via optimal trees.
\newblock {\em arXiv preprint arXiv:1812.00539}, 2018.

\bibitem{bewley2024counterfactual}
Tom Bewley, Salim~I Amoukou, Saumitra Mishra, Daniele Magazzeni, and Manuela Veloso.
\newblock Counterfactual metarules for local and global recourse.
\newblock {\em arXiv preprint arXiv:2405.18875}, 2024.

\bibitem{breiman1984classification}
L~Breiman, JH~Friedman, R~Olshen, and CJ~Stone.
\newblock Classification and regression trees.
\newblock 1984.

\bibitem{xgboost}
Tianqi Chen and Carlos Guestrin.
\newblock Xgboost: A scalable tree boosting system.
\newblock In {\em Proceedings of the 22nd ACM SIGKDD International Conference on Knowledge Discovery and Data Mining}, KDD '16, page 785–794, New York, NY, USA, 2016. Association for Computing Machinery.

\bibitem{Chou_Moreira_Bruza_Ouyang_Jorge_2022}
Yu-Liang Chou, Catarina Moreira, Peter Bruza, Chun Ouyang, and Joaquim Jorge.
\newblock Counterfactuals and causability in explainable artificial intelligence: Theory, algorithms, and applications.
\newblock {\em Information Fusion}, 81:59–83, May 2022.

\bibitem{demirovic2022murtree}
Emir Demirovi{\'c}, Anna Lukina, Emmanuel Hebrard, Jeffrey Chan, James Bailey, Christopher Leckie, Kotagiri Ramamohanarao, and Peter~J Stuckey.
\newblock Murtree: Optimal decision trees via dynamic programming and search.
\newblock {\em Journal of Machine Learning Research}, 23(26):1--47, 2022.

\bibitem{guidotti2018survey}
Riccardo Guidotti, Anna Monreale, Salvatore Ruggieri, Franco Turini, Fosca Giannotti, and Dino Pedreschi.
\newblock A survey of methods for explaining black box models.
\newblock {\em ACM computing surveys (CSUR)}, 51(5):1--42, 2018.

\bibitem{statlog_german_credit_data_144}
Hans Hofmann.
\newblock Statlog - german credit data.
\newblock UCI Machine Learning Repository, 1994.
\newblock {DOI}: https://doi.org/10.24432/C5NC77.

\bibitem{hu2019optimal}
Xiyang Hu, Cynthia Rudin, and Margo Seltzer.
\newblock Optimal sparse decision trees.
\newblock {\em Advances in neural information processing systems}, 32, 2019.

\bibitem{hyafil1976optimal}
Laurent Hyafil and Ronald~L. Rivest.
\newblock Constructing optimal binary decision trees is np-complete.
\newblock {\em Information Processing Letters}, 5(1):15--17, 1976.

\bibitem{cet_Kanamori_Takagi_Kobayashi_Ike}
Kentaro Kanamori, Takuya Takagi, Ken Kobayashi, and Yuichi Ike.
\newblock Counterfactual explanation trees: Transparent and consistent actionable recourse with decision trees.
\newblock In {\em International Conference on Artificial Intelligence and Statistics}, pages 1846--1870. PMLR, 2022.

\bibitem{Karimi_Scholkopf_Valera_2021}
Amir-Hossein Karimi, Bernhard Schölkopf, and Isabel Valera.
\newblock Algorithmic recourse: from counterfactual explanations to interventions.
\newblock In {\em Proceedings of the 2021 ACM Conference on Fairness, Accountability, and Transparency}, FAccT ’21, page 353–362, New York, NY, USA, March 2021. Association for Computing Machinery.

\bibitem{kavouras2025glanceglobalactionsnutshell}
Loukas Kavouras, Eleni Psaroudaki, Konstantinos Tsopelas, Dimitrios Rontogiannis, Nikolaos Theologitis, Dimitris Sacharidis, Giorgos Giannopoulos, Dimitrios Tomaras, Kleopatra Markou, Dimitrios Gunopulos, Dimitris Fotakis, and Ioannis Emiris.
\newblock Glance: Global actions in a nutshell for counterfactual explainability, 2025.

\bibitem{ke2017lightgbm}
Guolin Ke, Qi~Meng, Thomas Finley, Taifeng Wang, Wei Chen, Weidong Ma, Qiwei Ye, and Tie-Yan Liu.
\newblock Lightgbm: A highly efficient gradient boosting decision tree.
\newblock {\em Advances in neural information processing systems}, 30, 2017.

\bibitem{Ley_Mishra_Magazzeni_2023}
Dan Ley, Saumitra Mishra, and Daniele Magazzeni.
\newblock Globe-ce: A translation-based approach for global counterfactual explanations.
\newblock (arXiv:2305.17021), December 2023.
\newblock arXiv:2305.17021 [cs].

\bibitem{Lin_Zhong_Hu_Rudin_Seltzer_2022}
Jimmy Lin, Chudi Zhong, Diane Hu, Cynthia Rudin, and Margo Seltzer.
\newblock Generalized and scalable optimal sparse decision trees.
\newblock (arXiv:2006.08690), November 2022.
\newblock arXiv:2006.08690 [cs].

\bibitem{Lundberg_Lee_2017}
Scott Lundberg and Su-In Lee.
\newblock A unified approach to interpreting model predictions.
\newblock (arXiv:1705.07874), November 2017.
\newblock arXiv:1705.07874 [cs].

\bibitem{mctavish2022fast}
Hayden McTavish, Chudi Zhong, Reto Achermann, Ilias Karimalis, Jacques Chen, Cynthia Rudin, and Margo Seltzer.
\newblock Fast sparse decision tree optimization via reference ensembles.
\newblock In {\em Proceedings of the AAAI conference on artificial intelligence}, volume~36, pages 9604--9613, 2022.

\bibitem{bank_marketing_222}
S.~Moro, P.~Rita, and P.~Cortez.
\newblock Bank marketing.
\newblock UCI Machine Learning Repository, 2014.
\newblock {DOI}: https://doi.org/10.24432/C5K306.

\bibitem{narodytska2018learning}
Nina Narodytska, Alexey Ignatiev, Filipe Pereira, and Joao Marques-Silva.
\newblock Learning optimal decision trees with sat.
\newblock In {\em International Joint Conference on Artificial Intelligence 2018}, pages 1362--1368. Association for the Advancement of Artificial Intelligence (AAAI), 2018.

\bibitem{nijssen2007mining}
Siegfried Nijssen and Elisa Fromont.
\newblock Mining optimal decision trees from itemset lattices.
\newblock In {\em Proceedings of the 13th ACM SIGKDD international conference on Knowledge discovery and data mining}, pages 530--539, 2007.

\bibitem{quinlan1993c4}
J~Ross Quinlan.
\newblock {\em C4. 5: Programs for Machine Learning}.
\newblock Morgan Kaufmann, 1993.

\bibitem{rawal2020beyond}
Kaivalya Rawal and Himabindu Lakkaraju.
\newblock Beyond individualized recourse: Interpretable and interactive summaries of actionable recourses.
\newblock {\em Advances in Neural Information Processing Systems}, 33:12187--12198, 2020.

\bibitem{ribeiro2016should}
Marco~Tulio Ribeiro, Sameer Singh, and Carlos Guestrin.
\newblock " why should i trust you?" explaining the predictions of any classifier.
\newblock In {\em Proceedings of the 22nd ACM SIGKDD international conference on knowledge discovery and data mining}, pages 1135--1144, 2016.

\bibitem{rudin2019stop}
Cynthia Rudin.
\newblock Stop explaining black box machine learning models for high stakes decisions and use interpretable models instead.
\newblock {\em Nature machine intelligence}, 1(5):206--215, 2019.

\bibitem{Kaggle_IBM_HR_Analytics_2017}
Pavan Subhash.
\newblock Ibm hr analytics employee attrition \& performance dataset.
\newblock \url{https://www.kaggle.com/datasets/pavansubhasht/ibm-hr-analytics-attrition-dataset}, 2017.
\newblock Accessed: 2025-11-04.

\bibitem{Ustun_Spangher_Liu_2019}
Berk Ustun, Alexander Spangher, and Yang Liu.
\newblock Actionable recourse in linear classification.
\newblock In {\em Proceedings of the Conference on Fairness, Accountability, and Transparency}, FAT* ’19, page 10–19, New York, NY, USA, January 2019. Association for Computing Machinery.

\bibitem{streed_VAN_2023}
Jacobus van~der Linden, Mathijs de~Weerdt, and Emir Demirovi{\'c}.
\newblock Necessary and sufficient conditions for optimal decision trees using dynamic programming.
\newblock {\em Advances in Neural Information Processing Systems}, 36:9173--9212, 2023.

\bibitem{wachter2018counterfactualexplanationsopeningblack}
Sandra Wachter, Brent Mittelstadt, and Chris Russell.
\newblock Counterfactual explanations without opening the black box: Automated decisions and the gdpr, 2018.

\bibitem{watkins2024four}
Elizabeth~Anne Watkins and Jiahao Chen.
\newblock The four-fifths rule is not disparate impact: a woeful tale of epistemic trespassing in algorithmic fairness.
\newblock In {\em Proceedings of the 2024 ACM Conference on Fairness, Accountability, and Transparency}, pages 764--775, 2024.

\bibitem{zhang2024optimal}
Rui Zhang, Rui Xin, Margo Seltzer, and Cynthia Rudin.
\newblock Optimal sparse survival trees.
\newblock {\em Proceedings of machine learning research}, 238:352, 2024.

\end{thebibliography}

\appendix

\section{Proof of Proposition \ref{prop:optimality}} \label{app:proof}

We prove Proposition \ref{prop:optimality} by showing that recourse summary tree optimization is a \emph{separable} optimization task in the framework of \cite{streed_VAN_2023}. By their Theorem 4.7 this guarantees that dynamic programming computes the exact Pareto front.

\subsection{Defining the Optimization Task}

We define the optimization task $o = \langle g, t, \succ, \oplus, c, s_0 \rangle$ as follows.

\paragraph{State Space} A state is a pair $s = \langle \mathcal{D}_s, \mathcal{F}_s \rangle$.
$\mathcal{D}_s \subseteq \mathcal{D}_B$ is a subset of the (original, binarized) instance pairs $\mathcal{D}_s \subseteq \mathcal{D}_B$.
$\mathcal{F}_s \subseteq \mathcal{F}$ is the set of binary features used in branching decisions of ancestor nodes.

\paragraph{Solution Space} A solution is a pair $v = (v_C, v_L) \in \mathbb{R}_{\geq 0} \times \mathbb{Z}$.
$v_C$ is the action cost aggregated over the instances reaching that leaf and $v_L$ is the recourse loss aggregated in the same way.

\paragraph{Initial State} $s_0 = \langle \mathcal{D}_B, \varnothing \rangle$

\paragraph{Cost Function} For a state $s = \langle \mathcal{D}_s, \mathcal{F}_s \rangle$, the cost function $g$ is 0 when branching on features:
\begin{equation}
    g(s, f) = (0, 0), \quad f \in \mathcal{F} \label{eq:cost1}
\end{equation}
It is equal to the aggregated costs when assigning actions to leaves:
\begin{equation}
    g(s, a) = \left( \sum_{(\phi(x),  x) \in \mathcal{D}_s}  c(a, x), \sum_{(\phi(x),  x) \in \mathcal{D}_s}  l(a, x) \right), \quad a \in \mathcal{A} \label{eq:cost2}
\end{equation}

\paragraph{Transition Function} For a state $s = \langle \mathcal{D}_s, \mathcal{F}_s \rangle$, when branching on feature $f \in \mathcal{F} \setminus \mathcal{F}_s$ the transition function $t$ is:
\begin{align}
t(s, f) &= \langle \{(\phi(x), x) \in \mathcal{D}_s : \phi(x)_f = 0\}, \mathcal{F}_s \cup \{f\} \rangle \label{eq:transition1} \\
t(s, \bar{f}) &= \langle \{(\phi(x), x) \in \mathcal{D}_s : \phi(x)_f = 1\}, \mathcal{F}_s \cup \{f\} \rangle \label{eq:transition2}
\end{align}

\paragraph{Comparison Operator (Pareto Dominance)} The comparison operator $\succ$ is the strict Pareto dominance for element-wise comparison:
\begin{equation}
    (v_C, v_L) \succ (v_C', v_l') \Leftrightarrow (v_C \leq v_C') \wedge (v_L \leq v_L') \wedge ((v_C, v_L) \neq (v_C', v_l')) \label{eq:comparison}
\end{equation}

In other words, at least one of the inequalities is not strict. Note that $\succ$ ``translates'' to $\leq$ (flipped order), because $v \succ v'$ means ``$v$ is preferred to $v'$'' and therefore at least one of the minimization objectives is lower.

\paragraph{Combining Operator} The combining operator $\oplus$ is element-wise addition:
\begin{equation}
    (v_C, v_L) \oplus (v_C', v_L') = (v_C + v_C', v_L + v_L') \label{eq:addition}
\end{equation}

\paragraph{Constraint} We consider the unconstrained case, $c(v, s) =1, ~\forall v, s$.

In our implementation, we also use maximum leaf count and minimum leaf size as constraints, but these can be incorporated without affecting separability, as demonstrated in \cite{streed_VAN_2023}. Maximum leaf count is especially useful, since it imposes a bound on the total number of population subgroups which improves the interpretability of the recourse summary tree.

\subsection{Verification of Separability Conditions}

By Theorem 4.6 of \cite{streed_VAN_2023}, an optimization task is separable if and only if its cost function $g$ and transition function $t$ are Markovian, its combining operator $\oplus$ preserves order over its comparison operator $\succ$ and the constraint $c$ is anti-monotonic.

The cost function $g$ and transition function $t$, as defined in equations \eqref{eq:cost1}, \eqref{eq:cost2}, \eqref{eq:transition1} and \eqref{eq:transition2}, are indeed Markovian as by their definition they only depend on the current state and the selected action/feature. The trivial constraint $c(v,s) = 1$ is also trivially anti-monotonic.

\begin{lemma}[Order Preserving Combining Operator]
    The combining operator $\oplus$, as defined in Equation \eqref{eq:addition}, preserves the order of the comparison operator $\succ$ as defined in Equation \eqref{eq:comparison}.
\end{lemma}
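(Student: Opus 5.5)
The plan is to verify the order-preservation property directly from the definitions in Equations \eqref{eq:comparison} and \eqref{eq:addition}. In the sense of \cite{streed_VAN_2023}, this property says: for all solutions $v, v', w$, if $v \succ v'$ then $v \oplus w \succ v' \oplus w$, and symmetrically $w \oplus v \succ w \oplus v'$.

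Because $\oplus$ is element-wise addition of real (respectively integer) pairs, it is commutative. So $w \oplus v = v \oplus w$, and it suffices to prove the right-combination case.

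\textbf{Non-strict part.} Fix $v = (v_C, v_L)$, $v' = (v_C', v_L')$ and $w = (w_C, w_L)$, and assume $v \succ v'$. By Equation \eqref{eq:comparison} this gives $v_C \leq v_C'$, $v_L \leq v_L'$, and $(v_C, v_L) \neq (v_C', v_L')$. Adding $w_C$ to both sides of the first inequality and $w_L$ to both sides of the second gives $v_C + w_C \leq v_C' + w_C$ and $v_L + w_L \leq v_L' + w_L$. This uses only translation invariance of $\leq$ on $\mathbb{R}$ and $\mathbb{Z}$.

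\textbf{Strict part.} This is the only step needing a moment's care: we must check that $v \oplus w \neq v' \oplus w$. Suppose instead that $v \oplus w = v' \oplus w$. Since addition is cancellative, subtracting $w_C$ and $w_L$ componentwise gives $(v_C, v_L) = (v_C', v_L')$, which contradicts the assumption. Hence $v \oplus w \succ v' \oplus w$.

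If the framework's definition is instead phrased with the non-strict relation (``$v$ dominates or equals $v'$''), the non-strict part alone already gives the result. The strict version proven here covers both readings.

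I expect no real obstacle; the only subtlety is the strict case, which cancellativity handles as above. The same argument extends verbatim to the $(0,0)$ contributions of branching nodes in Equation \eqref{eq:cost1}, since $(0,0)$ is the identity of $\oplus$.
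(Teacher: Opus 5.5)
Your proof is correct and follows essentially the same route as the paper. Both translate the two componentwise inequalities by the third vector and note that strictness survives; the paper says ``at least one inequality stays strict,'' while you show $v \oplus w \neq v' \oplus w$ by cancellation, which is the same fact. Both then use commutativity of $\oplus$ to obtain the left-combination case.
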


\begin{proof}
    Let $v = (v_C, v_L)$, $v' = (v_C', v_L')$, and $u = (u_C, u_L)$ be solution values with $v \succ v'$.
    We must show that $v \oplus u \succ v' \oplus u$. By the definitions of the operators and the properties of element-wise addition, we have:

    \begin{itemize}
        \item $v_C \leq v_C'$ and $v_L \leq v_L'$, with at least one strict.
        \item $v_C + u_C \leq v_C' + u_C$ and $v_L + u_L \leq v_L' + u_L$, with at least one strict.
        \item Therefore $v \oplus u \succ v' \oplus u$
    \end{itemize}
    
    Since $\oplus$ is commutative, it also holds that $u \oplus v \succ u \oplus v'$.
\end{proof}

We have verified all necessary condition of Theorem 4.6, therefore recourse summary tree computation is separable. By Theorem 4.7 of \cite{streed_VAN_2023}, their dynamic programming formulation, termed \emph{STreeD}, computes the complete Pareto front of recourse summary trees:
\begin{equation}
T(s, d) = \begin{cases}
\text{opt}\left( \bigcup_{a \in \mathcal{A}} \{g(s, a)\}, s \right) & \text{if } d = 0\\
\text{opt}\left( \bigcup_{f \in \mathcal{F}} \text{merge}\left( T(t(s,f), d-1), T(t(s,\bar{f}), d-1), s, f \right), s \right) & \text{if } d > 0
\end{cases}
\end{equation}
Where the definitions of $\text{opt}$ and $\text{merge}$ are taken directly from \cite{streed_VAN_2023} and are the following:
\begin{align}
    \text{merge}(\Theta_1, \Theta_2, s) &= \{ v_1 \oplus v_2 \oplus g(s,f) ~|~ v_1 \in \Theta_1, v_2 \in \Theta_2 \}\\
    \text{opt}(\Theta, s) &= \text{nondom}(\text{feas}(\Theta, s))\\
    \text{nondom}(\Theta) &= \{ v\in \Theta ~|~ \nexists v' \in \Theta (u' \succ v) \}\\
    \text{feas}(\Theta, s) &= \{ v \in \Theta ~|~ c(v,s) = 1 \}
\end{align}

This concludes the proof of Proposition \ref{prop:optimality}. \qed

\section{Dataset Preprocessing, Model Parameters \& Analytical Quantitative Results} 
\label{app:experiments}

\subsection{Datasets.}
In our experiments we constrained each dataset to provide practical actions, suitable for real-world deployment, by prohibiting the actionability of sensitive features (e.g., Gender, Race, Age) and features that cannot be proposed to be altered, in specific settings for affected individuals, such as the Type or Level of Education, or the Distance of an employee's home from their work place in the Employee Attrition Setting. We also set directionality of actions, which respects features such as time-dependent features or financial status features, where an action of lowering one's income would be unfavorable. Below, we state the actionability of each dataset based on the percentage of features made immutable, the directionality constraints, and the number of bins allowed to be changed per feature. 

\begin{table}[ht]
\centering
\caption{Actionability configuration per dataset. Immutability \% is
computed after one-hot encoding of categorical features.}
\label{tab:actionability}
\small
\begin{tabular}{lcc}
\toprule
\textbf{Dataset} & \textbf{Immutable (\%)} &
\textbf{Actions ($>$)} \\
\midrule
Attrition & 26\% & 120{,}000 \\
German    & 20\% & 90{,}000  \\
Bank      & 65\% & 5{,}000   \\
Adult     & 50\% & 100{,}000 \\
\bottomrule
\end{tabular}
\end{table}

\subsection{Classifiers \& SOGAR parameters} 
\paragraph{Classifiers. }We used two state-of-the-art classifiers, LightGBM and XGBoost, and a DNN classifier, all of which employ a black-box architecture. The parameters used to train the models were standard across all datasets, and methods run in our experiments.
Specifically, the LightGBM Classifier, was set to \texttt{n\_estimators} \(=100\) and \texttt{n\_leaves}\(=16\), the XGBoost was set to \texttt{n\_estimators} \(=100\) and \texttt{max\_depth}\(=6\), and the DNN was set to have \texttt{depth} \(=5\), \texttt{width} $=50$ and \texttt{dropout}\(=0.1\). The rest of the parameters were set to default values.

\paragraph{SOGAR default settings. }SOGAR's key hyperparameters are the maximum tree depth $d$, maximum
number of leaf nodes $m$, minimum leaf size $\ell$, and action sparsity
$k$ (maximum number of feature changes per action). All main-paper
experiments use the following defaults depicted in Table \ref{tab:sogar_params}. The larger minimum leaf size for Adult (500 vs.\ 50) is necessary to
control the search space on a dataset with over 30,000 training instances and
161 binarised features. Ablations on depth, sparsity, and bins are
reported in Appendix~\ref{app:ablations}.

\begin{table}[h]
\centering
\caption{SOGAR default hyperparameters for all main experiments.}
\label{tab:sogar_params}
\footnotesize
\begin{tabular}{ll}
\toprule
\textbf{Parameter} & \textbf{Value} \\
\midrule
Max depth $d$           & 3 \\
Max nodes $m$           & 7 \\
Min leaf size $\ell$    & 50 (500 for Adult) \\
Action sparsity $k$     & 3 \\
Feature bins            & 10--50 (distribution-adaptive) \\
Timeout                 & None (full Pareto front) \\
\bottomrule
\end{tabular}
\end{table}

\subsection{Extended Quantitative Results of Table~\ref{tab:results_summary}}

In this part of the Appendix, we present the extended variant of Table~\ref{tab:results_summary} that presents the results of the 10-fold cross validation across the four datasets of Employee Attrition, German Credit, Bank Marketing, and Adult Income, and is used to compare SOGAR with baselines. For the integrity of our quantitative experiments, the extended variants include the standard deviation for all four metrics recorded in Table~\ref{tab:results_summary}, average \emph{cost, loss, invalidity,} and the \emph{computation time} with the standard deviation. 

\begin{table}[t!]
\centering
\caption{Evaluating SOGAR against competing methods on 4 datasets and 3
classifiers, with metrics averaged over 10 folds (mean$\pm$std). Lower is
better for all metrics. SOGAR reports the solution with lowest invalidity.
A dash indicates timeout exceeded.}
\label{tab:results_summary_std}
\scriptsize
\setlength{\tabcolsep}{4pt}
\renewcommand{\arraystretch}{0.92}

\begin{subtable}{\linewidth}
\centering
\caption{Attrition and German Credit}
\label{tab:results_summary_a}
\resizebox{\linewidth}{!}{%
\begin{tabular}{ll cccc cccc}
\toprule
\textbf{Models} & \textbf{Algorithms} &
\multicolumn{4}{c}{\textbf{Attrition}} &
\multicolumn{4}{c}{\textbf{German}} \\
\cmidrule(lr){3-6}\cmidrule(lr){7-10}
 & &
\textit{cost}$\downarrow$ & \textit{loss}$\downarrow$ & \textit{inv.}$\downarrow$ & \textit{time(s)}$\downarrow$ &
\textit{cost}$\downarrow$ & \textit{loss}$\downarrow$ & \textit{inv.}$\downarrow$ & \textit{time(s)}$\downarrow$ \\
\midrule
\multirow{7}{*}{DNN}
& CET             & 0.30$\pm$0.12 & 0.10$\pm$0.07 & 0.40$\pm$0.16 & 198.7$\pm$18.6 & 0.04$\pm$0.02 & 0.44$\pm$0.13 & 0.48$\pm$0.12 & 255.3$\pm$68.2 \\
& AReS            & 0.23$\pm$0.04 & 0.23$\pm$0.12 & 0.46$\pm$0.13 & 139.3$\pm$6.2  & 0.17$\pm$0.30 & 0.39$\pm$0.07 & 0.56$\pm$0.30 & $1{,}845.3\pm210$ \\
& GLOBE-CE        & 0.81$\pm$0.02 & $<\!$0.01$\pm$0.02 & 0.81$\pm$0.02 & 7.6$\pm$0.25 & 0.91$\pm$0.003 & 0.01$\pm$0.02 & 0.92$\pm$0.02 & 7.58$\pm$0.32 \\
& GLANCE          & 0.54$\pm$0.06 & $<\!$0.01$\pm$0.002 & 0.54$\pm$0.06 & 11.66$\pm$0.22 & 0.47$\pm$0.16 & $<\!$0.01$\pm$0.004 & 0.47$\pm$0.16 & 8.5$\pm$0.4 \\
& T-CREx$_{0.9}$  & 0.35$\pm$0.11 & 0.77$\pm$0.12 & 1.13$\pm$0.21 & 7.0$\pm$0.33   & 0.53$\pm$0.12 & 0.78$\pm$0.08 & 1.31$\pm$0.13 & 32.19$\pm$73.8 \\
& T-CREx$_{0.99}$ & 0.56$\pm$0.11 & 0.65$\pm$0.15 & 1.21$\pm$0.16 & 176.32$\pm$230 & 0.60$\pm$0.08 & 0.70$\pm$0.15 & 1.25$\pm$0.15 & 19.63$\pm$16 \\
& \best{SOGAR}    & \textbf{0.05$\pm$0.002} & \textbf{$<\!$0.01$\pm$0.001} & \textbf{0.05$\pm$0.003} & 250.0$\pm$6.2 & \textbf{0.02$\pm$0.004} & \textbf{0.00$\pm$0.00} & \textbf{0.02$\pm$0.003} & 28.2$\pm$0.9 \\
\midrule
\multirow{7}{*}{XGBoost}
& CET             & 0.40$\pm$0.10 & 0.17$\pm$0.14 & 0.57$\pm$0.11 & 342.6$\pm$29.3  & 0.12$\pm$0.03 & 0.25$\pm$0.05 & 0.37$\pm$0.05 & 151.2$\pm$35.7 \\
& AReS            & \textbf{0.02$\pm$0.02} & 0.97$\pm$0.04 & 0.99$\pm$0.04 & 11.68$\pm$2 & 0.41$\pm$0.06 & 0.13$\pm$0.05 & 0.55$\pm$0.06 & $2{,}362.8\pm123.3$ \\
& GLOBE-CE        & 0.75$\pm$0.03 & \textbf{0.00$\pm$0.00} & 0.75$\pm$0.03 & \textbf{5.90$\pm$0.21} & 0.92$\pm$0.003 & 0.25$\pm$0.17 & 1.16$\pm$0.17 & \textbf{5.5$\pm$0.0} \\
& GLANCE          & 0.56$\pm$0.12 & 0.02$\pm$0.03 & 0.58$\pm$0.11 & 11.70$\pm$0.11  & 0.50$\pm$0.15 & \textbf{$<\!$0.01$\pm$0.003} & 0.50$\pm$0.15 & 8.4$\pm$0.07 \\
& T-CREx$_{0.9}$  & 0.33$\pm$0.08 & 0.72$\pm$0.18 & 1.10$\pm$0.25 & 7.99$\pm$1.14  & 0.40$\pm$0.15 & 0.75$\pm$0.18 & 1.20$\pm$0.20 & 8.2$\pm$1.40 \\
& T-CREx$_{0.99}$ & 0.48$\pm$0.17 & 0.64$\pm$0.26 & 1.12$\pm$0.32 & 14.13$\pm$2.65 & 0.56$\pm$0.07 & 0.57$\pm$0.15 & 1.13$\pm$0.22 & 8.03$\pm$28 \\
& \best{SOGAR}    & 0.23$\pm$0.02 & 0.02$\pm$0.02 & \textbf{0.26$\pm$0.01} & 254.1$\pm$13.6 & \textbf{0.09$\pm$0.009} & 0.02$\pm$0.007 & \textbf{0.12$\pm$0.01} & 26.4$\pm$1.4 \\
\midrule
\multirow{7}{*}{LightGBM}
& CET             & \textbf{0.26$\pm$0.18} & 0.58$\pm$0.24 & 0.84$\pm$0.09 & 286.7$\pm$20.1   & 0.13$\pm$0.04 & 0.31$\pm$0.11 & 0.44$\pm$0.10 & 173.8$\pm$13.1 \\
& AReS            & 0.44$\pm$0.05 & 0.44$\pm$0.10 & 0.88$\pm$0.06 & 307.4$\pm$22.7   & 0.19$\pm$0.08 & 0.65$\pm$0.13 & 0.84$\pm$12   & 680.4$\pm$44.6 \\
& GLOBE-CE        & 0.79$\pm$0.02 & \textbf{0.01$\pm$0.008} & 0.80$\pm$0.03 & \textbf{7.96$\pm$0.28} & 0.91$\pm$0.0 & 0.23$\pm$0.30 & 1.14$\pm$0.30 & \textbf{6.1$\pm$0.2} \\
& GLANCE          & 0.62$\pm$0.02 & 0.08$\pm$0.04 & 0.70$\pm$0.04 & 12.17$\pm$0.5    & 0.52$\pm$0.14 & \textbf{$<\!$0.01$\pm$0.003} & 0.53$\pm$0.14 & 9.3$\pm$0.29 \\
& T-CREx$_{0.9}$  & 0.37$\pm$0.12 & 0.80$\pm$0.13 & 1.16$\pm$0.23 & 7.08$\pm$0.29    & 0.44$\pm$0.09 & 0.72$\pm$0.09 & 1.16$\pm$0.10 & 8.25$\pm$1.95 \\
& T-CREx$_{0.99}$ & 0.58$\pm$0.06 & 0.84$\pm$0.12 & 1.42$\pm$0.17 & 112.67$\pm$114.9 & 0.54$\pm$0.04 & 0.61$\pm$0.30 & 1.15$\pm$0.32 & 10.0$\pm$1.4 \\
& \best{SOGAR}    & 0.30$\pm$0.02 & 0.07$\pm$0.02 & \textbf{0.37$\pm$0.02} & 356.4$\pm$9.4 & \textbf{0.11$\pm$0.005} & 0.03$\pm$0.007 & \textbf{0.14$\pm$0.009} & 41.9$\pm$2.5 \\
\bottomrule
\end{tabular}}
\end{subtable}

\vspace{0.8em}

\begin{subtable}{\linewidth}
\centering
\caption{Bank Marketing and Adult Income}
\label{tab:results_summary_b}
\resizebox{\linewidth}{!}{%
\begin{tabular}{ll cccc cccc}
\toprule
\textbf{Models} & \textbf{Algorithms} &
\multicolumn{4}{c}{\textbf{Bank}} &
\multicolumn{4}{c}{\textbf{Adult}} \\
\cmidrule(lr){3-6}\cmidrule(lr){7-10}
 & &
\textit{cost}$\downarrow$ & \textit{loss}$\downarrow$ & \textit{inv.}$\downarrow$ & \textit{time(s)}$\downarrow$ &
\textit{cost}$\downarrow$ & \textit{loss}$\downarrow$ & \textit{inv.}$\downarrow$ & \textit{time(s)}$\downarrow$ \\
\midrule
\multirow{7}{*}{DNN}
& CET             & 0.27$\pm$0.16 & 0.60$\pm$0.22 & 0.86$\pm$0.09 & 378.9$\pm$84.1 & --            & --            & --            & -- \\
& AReS            & --            & --            & --            & --             & --            & --            & --            & -- \\
& GLOBE-CE        & 0.87$\pm$0.30 & 0.04$\pm$0.20 & 0.89$\pm$0.30 & 23.2$\pm$0.1   & 0.98$\pm$0.10 & 0.00$\pm$0.00 & 0.98$\pm$0.10 & 170.5$\pm$7 \\
& GLANCE          & 0.65$\pm$0.06 & $<\!$0.01$\pm$0.006 & 0.66$\pm$0.06 & 10.75$\pm$0.33 & 0.96$\pm$0.003 & 0.00$\pm$0.00 & 0.96$\pm$0.003 & 47.22$\pm$5.1 \\
& T-CREx$_{0.9}$  & --            & --            & --            & --             & 0.66$\pm$0.30 & 0.91$\pm$0.08 & 1.58$\pm$0.20 & 15.04$\pm$5.09 \\
& T-CREx$_{0.99}$ & --            & --            & --            & --             & 0.64$\pm$0.35 & 0.86$\pm$0.16 & 1.50$\pm$0.31 & 10.4$\pm$0.34 \\
& \best{SOGAR}    & \textbf{0.17$\pm$0.01} & 0.29$\pm$0.04 & \textbf{0.46$\pm$0.04} & 268.9$\pm$6.7 & \textbf{0.03$\pm$0.01} & 0.03$\pm$0.03 & \textbf{0.06$\pm$0.03} & $1{,}248.4\pm67.4$ \\
\midrule
\multirow{7}{*}{XGBoost}
& CET             & 0.51$\pm$0.11 & 0.24$\pm$0.20 & 0.75$\pm$0.06 & 438.8$\pm$65.7 & --            & --            & --            & -- \\
& AReS            & --            & --            & --            & --             & --            & --            & --            & -- \\
& GLOBE-CE        & 0.91$\pm$0.20 & \textbf{0.01$\pm$0.0} & 0.92$\pm$0.20 & 30.91$\pm$0.0 & 0.98$\pm$0.01 & \textbf{0.00$\pm$0.00} & 0.98$\pm$0.10 & 205.2$\pm$6 \\
& GLANCE          & 0.69$\pm$0.12 & 0.03$\pm$0.03 & 0.72$\pm$0.11 & 10.6$\pm$0.21  & 0.96$\pm$0.01 & $<\!$0.01$\pm$0.001 & 0.96$\pm$0.01 & 35.1$\pm$1.53 \\
& T-CREx$_{0.9}$  & 0.32$\pm$0.16 & 0.99$\pm$0.01 & 1.31$\pm$0.15 & 6.87$\pm$0.33  & 0.70$\pm$0.17 & 0.90$\pm$0.13 & 1.46$\pm$0.25 & 12.52$\pm$2.48 \\
& T-CREx$_{0.99}$ & --            & --            & --            & --             & 0.35$\pm$0.03 & 0.99$\pm$0.02 & 1.13$\pm$0.20 & 10.23$\pm$0.4 \\
& \best{SOGAR}    & \textbf{0.21$\pm$0.03} & 0.26$\pm$0.06 & \textbf{0.47$\pm$0.04} & 250.7$\pm$32.3 & \textbf{0.30$\pm$0.04} & 0.34$\pm$0.06 & \textbf{0.64$\pm$0.04} & 763.0$\pm$62.8 \\
\midrule
\multirow{7}{*}{LightGBM}
& CET             & 0.60$\pm$0.02 & 0.01$\pm$0.01 & 0.61$\pm$0.01 & 383.9$\pm$119   & --            & --            & --            & -- \\
& AReS            & 0.36$\pm$0.02 & 0.80$\pm$0.08 & 1.15$\pm$0.09 & 213.7$\pm$29.2  & --            & --            & --            & -- \\
& GLOBE-CE        & 0.92$\pm$0.30 & \textbf{$<\!$0.01$\pm$0.0} & 0.92$\pm$0.30 & 29.8$\pm$0.5 & 0.97$\pm$0.01 & \textbf{0.00$\pm$0.00} & 0.97$\pm$0.01 & 171.6$\pm$8.9 \\
& GLANCE          & 0.71$\pm$0.09 & \textbf{$<\!$0.01$\pm$0.00} & 0.71$\pm$0.09 & \textbf{11.0$\pm$0.42} & 0.93$\pm$0.01 & $<\!$0.01$\pm$0.00 & 0.93$\pm$0.01 & \textbf{32.0$\pm$0.89} \\
& T-CREx$_{0.9}$  & 0.38$\pm$0.20 & 0.66$\pm$0.26 & 1.04$\pm$0.21 & 6.83$\pm$0.55   & 0.28$\pm$0.15 & 0.96$\pm$0.04 & 1.24$\pm$0.16 & 10.88$\pm$2.52 \\
& T-CREx$_{0.99}$ & 0.47$\pm$0.14 & 0.58$\pm$0.28 & 1.04$\pm$0.23 & 8.71$\pm$15.79  & 0.45$\pm$0.17 & 0.99$\pm$0.01 & 1.44$\pm$0.17 & 10.5$\pm$0.85 \\
& \best{SOGAR}    & \textbf{0.19$\pm$0.03} & 0.06$\pm$0.02 & \textbf{0.25$\pm$0.03} & 166.2$\pm$4.3 & 0.35$\pm$0.03 & 0.17$\pm$0.02 & \textbf{0.52$\pm$0.03} & $1{,}235.3\pm75.3$ \\
\bottomrule
\end{tabular}}
\end{subtable}

\end{table}

\section{Timeout Ablation}
\label{app:timeouts}

Because SOGAR is an \emph{anytime} algorithm, a wall-clock timeout can
be imposed to enforce a hard budget, with the solver returning the best
non-dominated solutions found so far. Figure~\ref{fig:pareto_timeout}
illustrates this behaviour on the Bank dataset with LightGBM, where the
full Pareto front requires 166\,s: even at 10\,s, recovered solutions
exist below $(0.2, 0.4)$ in cost--loss space, outperforming the
second-best competing method CET (invalidity $0.61$ on Bank,
Table~\ref{tab:results_summary_b}). Table~\ref{tab:timeout} quantifies
this across all datasets, reporting degradation as the relative increase
in invalidity over the no-timeout baseline:
\[
  \text{Degradation} = \frac{\text{inv}_{\tau} -
  \text{inv}_{\infty}}{\text{inv}_{\infty}} \times 100\%.
\]

\begin{figure}[t]
    \centering
    \includegraphics[width=0.58\linewidth]{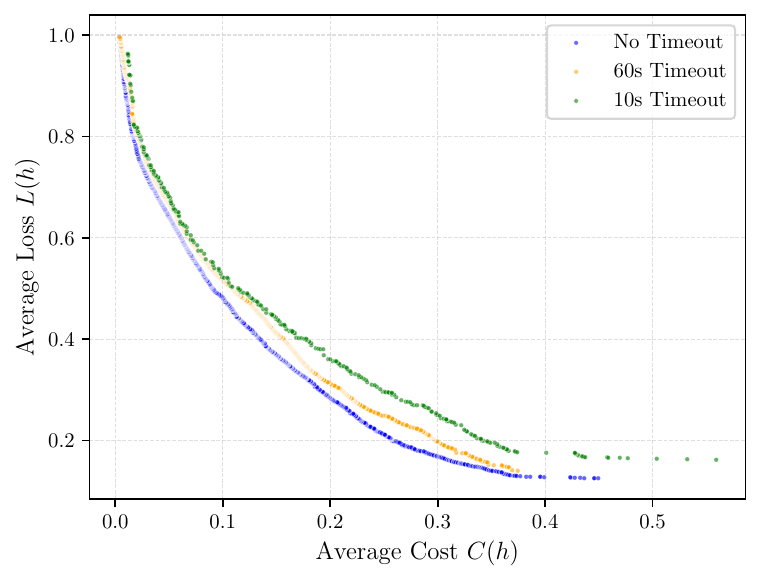}
    \caption{Solutions recovered by SOGAR on Bank (LightGBM) under
    different timeout values. Blue: full Pareto front (166\,s).
    Orange: best solutions within 60\,s. Green: best solutions within
    10\,s.}
    \label{fig:pareto_timeout}
\end{figure}

For Attrition, German, and Bank, even the 10-second budget produces degradation of only 8--16\%, while 60-second solutions are near-optimal.
The Adult dataset requires at least 200\,s to produce any valid solution, consistent with its higher baseline runtime (1{,}235\,s,
Table~\ref{tab:results_summary_b}) and the combinatorial complexity of its large, high-dimensional feature space. Even under the 200-second constraint, Adult degradation is only 5.8\%. At 200\,s, SOGAR's invalidity of $0.55$ remains below all baselines that return solution on the Adult setting, with the runners-up being the solution of GLANCE at $0.93$ average invalidity.

\begin{table}[t]
\centering
\caption{Timeout ablation (LightGBM, depth 3, 10-fold CV). Degradation is the relative invalidity increase over the no-timeout baseline.
Mean$\pm$std reported across folds.}
\label{tab:timeout}
\footnotesize
\begin{tabular}{llcccc}
\toprule
\textbf{Dataset} & \textbf{Timeout} &
\textit{cost}$\downarrow$ & \textit{loss}$\downarrow$ &
\textit{inv.}$\downarrow$ & \textbf{Deg.\ (\%)} \\
\midrule
\multirow{3}{*}{Attrition}
 & 10\,s      & $0.33\!\pm\!0.02$ & $0.10\!\pm\!0.03$ & $0.43\!\pm\!0.03$ & $+16.2$ \\
 & 60\,s      & $0.31\!\pm\!0.02$ & $0.08\!\pm\!0.02$ & $0.39\!\pm\!0.02$ & $+5.4$  \\
 & No timeout & $0.30\!\pm\!0.02$ & $0.07\!\pm\!0.02$ & $0.37\!\pm\!0.02$ & ---     \\
\midrule
\multirow{3}{*}{German}
 & 10\,s      & $0.12\!\pm\!0.00$ & $0.04\!\pm\!0.01$ & $0.16\!\pm\!0.01$ & $+14.3$ \\
 & 60\,s      & $0.11\!\pm\!0.01$ & $0.03\!\pm\!0.01$ & $0.14\!\pm\!0.01$ & $0.0$   \\
 & No timeout & $0.11\!\pm\!0.01$ & $0.03\!\pm\!0.01$ & $0.14\!\pm\!0.01$ & ---     \\
\midrule
\multirow{3}{*}{Bank}
 & 10\,s      & $0.19\!\pm\!0.05$ & $0.08\!\pm\!0.04$ & $0.28\!\pm\!0.04$ & $+12.0$ \\
 & 60\,s      & $0.19\!\pm\!0.04$ & $0.08\!\pm\!0.03$ & $0.27\!\pm\!0.04$ & $+8.0$  \\
 & No timeout & $0.19\!\pm\!0.03$ & $0.06\!\pm\!0.02$ & $0.25\!\pm\!0.03$ & ---     \\
\midrule
\multirow{4}{*}{Adult}
 & 10\,s      & \multicolumn{3}{c}{\textit{no solution found}} & ---    \\
 & 60\,s      & \multicolumn{3}{c}{\textit{no solution found}} & ---    \\
 & 200\,s     & $0.35\!\pm\!0.01$ & $0.20\!\pm\!0.04$ & $0.55\!\pm\!0.04$ & $+5.8$ \\
 & No timeout & $0.35\!\pm\!0.03$ & $0.17\!\pm\!0.02$ & $0.52\!\pm\!0.03$ & ---    \\
\bottomrule
\end{tabular}
\end{table}

\section{Ablation Studies}
\label{app:ablations}

\subsection{Tree Depth}
\label{app:ablations:depth}

SOGAR's recourse summary tree has a maximum depth hyperparameter $d$,
which directly controls the number of subgroups exposed to practitioners
(up to $2^d$ leaf nodes). We evaluate $d \in \{2, 3, 4\}$. Depth 3 is
the default used in all main-paper experiments
(Tables~\ref{tab:results_summary}); depth 2 and depth 4 isolate the
effect of tree expressiveness on solution quality and runtime.

\paragraph{Depth 2.}
Table~\ref{tab:depth2} reports results for depth $d = 2$. Across all datasets
and classifiers, depth-2 trees run substantially faster --- often by an
order of magnitude --- while the increase in invalidity relative to
depth 3 is modest. For example, on Adult with LightGBM, invalidity
rises from $0.52$ (depth 3) to $0.57$ (depth 2) while runtime drops
from $1{,}235$\,s to $51$\,s, a $24\times$ speed-up for a 9.6\%
quality degradation. Depth 2 still outperforms all baselines on the
invalidity metric across every dataset and classifier combination,
making it a practical default when compute is constrained. The
solutions are also maximally interpretable, producing exactly four
subgroups with simple branching rules.

\begin{table}[t]
\centering
\caption{Depth-2 results across datasets and classifiers (10-fold CV,
mean$\pm$std). Lower is better for all metrics.}
\label{tab:depth2}
\small
\setlength{\tabcolsep}{5pt}
\begin{tabular}{llcccc}
\toprule
\textbf{Dataset} & \textbf{Model} &
\textit{cost}$\downarrow$ & \textit{loss}$\downarrow$ &
\textit{inv.}$\downarrow$ & \textit{time(s)}$\downarrow$ \\
\midrule
\multirow{3}{*}{Attrition}
 & DNN      & $0.06\pm0.00$ & $<\!0.01\pm0.00$ & $0.06\pm0.00$ & $9\pm1$  \\
 & LightGBM & $0.31\pm0.03$ & $0.12\pm0.02$    & $0.43\pm0.03$ & $12\pm1$ \\
 & XGBoost  & $0.25\pm0.01$ & $0.06\pm0.01$    & $0.31\pm0.01$ & $10\pm0$ \\
\midrule
\multirow{3}{*}{German}
 & DNN      & $0.02\pm0.01$ & $<\!0.01\pm0.00$ & $0.03\pm0.01$ & $3\pm0$ \\
 & LightGBM & $0.11\pm0.01$ & $0.05\pm0.01$    & $0.16\pm0.01$ & $3\pm1$ \\
 & XGBoost  & $0.11\pm0.01$ & $0.04\pm0.02$    & $0.15\pm0.02$ & $3\pm0$ \\
\midrule
\multirow{3}{*}{Bank}
 & DNN      & $0.17\pm0.01$ & $0.33\pm0.03$ & $0.49\pm0.04$ & $8\pm0$ \\
 & LightGBM & $0.19\pm0.03$ & $0.08\pm0.03$ & $0.27\pm0.03$ & $6\pm1$ \\
 & XGBoost  & $0.22\pm0.04$ & $0.29\pm0.07$ & $0.51\pm0.04$ & $8\pm1$ \\
\midrule
\multirow{3}{*}{Adult}
 & DNN      & $0.03\pm0.01$ & $0.04\pm0.03$ & $0.07\pm0.04$ & $47\pm1$ \\
 & LightGBM & $0.37\pm0.02$ & $0.20\pm0.04$ & $0.57\pm0.03$ & $51\pm3$ \\
 & XGBoost  & $0.32\pm0.08$ & $0.37\pm0.11$ & $0.69\pm0.04$ & $63\pm3$ \\
\bottomrule
\end{tabular}
\end{table}

\paragraph{Depth 3 (default).}
Depth-3 results with standard deviations across 10 folds are reported
in Tables~\ref{tab:results_summary}. This depth offers the
best balance between solution quality and computational cost: it
consistently achieves the lowest invalidity among all baselines while
remaining tractable on all four datasets without timeouts.

\paragraph{Depth 4.}
Table~\ref{tab:depth4} reports results for $d = 4$ with the LightGBM classifier. At this depth, the search space grows substantially, as the solver must explore up to 16 leaf nodes, and the number of candidate subproblems increases combinatorially. On the Adult dataset, depth-4 solutions require approximately $15{,}937$\,s ($\approx$4.4 hours),
a $12.9\times$ increase over depth 3. The invalidity improvement is
marginal ($0.52 \to 0.50$), yielding diminishing returns relative to
the computational investment. On smaller datasets where depth-4 remains tractable, some additional quality is recoverable: invalidity improves by 11\% on Attrition ($0.37 \to 0.33$), 7\% on German ($0.14 \to 0.13$), and 12\% on Bank ($0.25\to 0.22$), though depth-3 already outperforms all baselines in these settings, in a significantly lower computational cost for a complete Pareto front of solutions. This confirms that depth 3 represents the practical sweet spot for the datasets considered: deeper trees offer only incremental quality gains while incurring substantially higher costs, consistent with the NP-hard nature of the underlying optimization problem. 

Table~\ref{tab:depth_summary} consolidates the depth trade-off for the
Adult dataset with LightGBM, which is the most computationally demanding
configuration.

\begin{table}[t]
\centering
\caption{Depth-4 results with LightGBM across datasets (10-fold CV,
mean$\pm$std). Lower is better for all metrics.}
\label{tab:depth4}
\small
\setlength{\tabcolsep}{5pt}
\begin{tabular}{lcccc}
\toprule
\textbf{Dataset} &
\textit{cost}$\downarrow$ & \textit{loss}$\downarrow$ &
\textit{inv.}$\downarrow$ & \textit{time(s)}$\downarrow$ \\
\midrule
Attrition & $0.29\pm0.02$ & $0.03\pm0.01$ & $0.33\pm0.02$ & $2{,}710\pm97$s \\
German    & $0.11 \pm 0.01$ & $0.02\pm 0.01$ & $0.13\pm0.01$ & $582.5\pm51$s \\
Bank      & $0.17 \pm 0.04$ & $0.05 \pm 0.02$ & $0.22 \pm 0.03$ & $3{,}112 \pm 478$s \\
Adult     & $0.33\pm0.02$ & $0.17\pm0.02$ & $0.50\pm0.03$ & $15{,}937\pm1,410$s \\
\bottomrule
\end{tabular}
\end{table}

\begin{table}[t]
\centering
\caption{Depth ablation summary on Adult Income (LightGBM, 10-fold CV).
Degradation is relative invalidity change vs.\ depth 3.}
\label{tab:depth_summary}
\footnotesize
\begin{tabular}{lcccc}
\toprule
\textbf{Depth} &
\textit{inv.}$\downarrow$ & \textit{time(s)}$\downarrow$ &
\textbf{Deg.\ (\%)} & \textbf{Speed-up} \\
\midrule
2 & $0.57\pm0.03$ & $51\pm3$          & $+9.6\%$ & $24.2\times$ \\
3 & $0.52\pm0.03$ & $1{,}235\pm75$    & ---       & $1.0\times$  \\
4 & $0.50\pm0.03$ & $15{,}937\pm1{,}410$ & $-3.8\%$ & $0.08\times$ \\
\bottomrule
\end{tabular}
\end{table}

\subsection{Action Sparsity}
\label{app:ablations:sparsity}

In the default configuration SOGAR allows actions that modify up to
three features simultaneously (action sparsity $k=3$). We assess a
sparser setting ($k=2$), evaluating the trade-off between solution
quality, runtime, and memory usage (Table~\ref{tab:sparsity}).
Experiments use the LightGBM classifier at depth 3.

\begin{table}[t]
\centering
\caption{Action-sparsity ablation: $k=2$ vs.\ default $k=3$
(LightGBM, depth 3, 10-fold CV, mean$\pm$std).}
\label{tab:sparsity}
\footnotesize
\begin{tabular}{lcccc}
\toprule
\textbf{Dataset} &
\textit{cost}$\downarrow$ & \textit{loss}$\downarrow$ &
\textit{inv.}$\downarrow$ & \textit{time(s)}$\downarrow$ \\
\midrule
Attrition & $0.32\pm0.02$ & $0.17\pm0.03$ & $0.50\pm0.03$ & $162.6\pm6.8$  \\
German    & $0.12\pm0.00$ & $0.08\pm0.01$ & $0.19\pm0.01$ & $23.4\pm1.7$   \\
Bank      & $0.14\pm0.03$ & $0.22\pm0.03$ & $0.36\pm0.04$ & $116.7\pm2.4$  \\
Adult     & $0.40\pm0.16$ & $0.44\pm0.18$ & $0.84\pm0.04$ & $533.9\pm25.8$ \\
\bottomrule
\end{tabular}
\end{table}

Restricting actions to two features yields approximately a 50\%
reduction in runtime (e.g.\ Attrition: $356\to163$\,s; Adult:
$1{,}235\to534$\,s) and a proportional reduction in action-cache
memory, since the cache scales with the number of distinct actions.
The cost metric is largely unaffected, but recourse loss increases
noticeably --- particularly on Adult ($0.17\to0.44$) and Attrition
($0.07\to0.17$) --- because sparser actions can no longer
simultaneously address all necessary feature changes for harder
subgroups. On Adult, the invalidity rises from $0.52$ to $0.84$,
which is still below T-CREx ($1.24$), and GLANCE ($0.93$),
illustrating that sparsity constraints interact strongly with dataset
complexity. We therefore recommend $k=2$ only when memory or time is
severely constrained, and note that even under this restriction SOGAR
remains competitive with baselines on the smaller datasets.

\subsection{Number of Discretisation Bins}
\label{app:ablations:bins}

Continuous features are discretised into bins before the tree search.
In the default configuration the number of bins varies per feature
according to its empirical distribution (ranging from 10 to 50 bins).
We evaluate two reduced-bin settings (5 and 10 bins applied uniformly
to all numerical features) on the Attrition dataset with the LightGBM
classifier (Table~\ref{tab:bins}). Degradation is computed relative to
the default-bin baseline.

\begin{table}[h]
\centering
\caption{Binning ablation (LightGBM, Attrition, depth 3, 10-fold CV,
mean$\pm$std). Degradation is relative invalidity increase vs.\ default
bins.}
\label{tab:bins}
\small
\begin{tabular}{lccccc}
\toprule
\textbf{Bins} &
\textit{cost}$\downarrow$ & \textit{loss}$\downarrow$ &
\textit{inv.}$\downarrow$ & \textit{time(s)}$\downarrow$ &
\textbf{Deg.\ (\%)} \\
\midrule
Default (10--50)  & $0.30\pm0.02$ & $0.07\pm0.02$ & $0.37\pm0.02$ & $356.4\pm9.4$ & ---      \\
10 bins (uniform) & $0.31\pm0.01$ & $0.08\pm0.02$ & $0.38\pm0.01$ & $163\pm4$         & $+3.2\%$ \\
5 bins (uniform)  & $0.32\pm0.01$ & $0.07\pm0.01$ & $0.39\pm0.01$ & $127\pm4$         & $+4.7\%$ \\
\bottomrule
\end{tabular}
\end{table}

Reducing the bin count from the default distribution-adaptive scheme to a uniform 10-bin or 5-bin encoding roughly halves runtime (to 163\,s and 127\,s respectively) with only 3--5\% degradation in invalidity.
On this configuration, SOGAR  with 5 uniform bins still outperforms all baselines on the invalidity metric, suggesting that reduced binning is a viable strategy when runtime or memory is constrained.
The runtime reduction stems from two sources: fewer bins produce fewer binarized features, shrinking the split search
space, and they also reduce the number of distinct actions in the cache.
We note that very coarse binning risks conflating meaningfully different feature values within a single bin. A practical lower bound are 5 bins below which solution quality may degrade more sharply for features with heavy-tailed distributions.

\section{Leaf solution acceleration}
\label{app:hardware_acceleration}

We have extended STreeD to include multi-threaded CPU and GPU acceleration in our optimization task.
As mentioned in Section \ref{sec:proposed_method}, when assigning an action to a leaf, the solver needs to try all actions and for each of them aggregate the cost-loss pairs for that action of the individuals reaching that leaf.
This is an embarrassingly parallelizable task which we implement using the TBB framework for CPU parallelization and the CUDA framework for GPU parallelization.
Users can make use whichever acceleration their hardware supports.
Table \ref{tab:runtime_comp} presents the runtimes all datasets using the single-threaded implementation and the GPU implementation.
We can see that the GPU incurs significant speed-ups, eliminating time-outs and in the case of the German Credit dataset the running time is reduced by a factor of over 23.

\begin{table}[h]
\centering
\caption{Comparison of SOGAR's running times with and without GPU acceleration on the all datasets. The timeout is set to 1 hour.}
\label{tab:runtime_comp}
\small
\begin{tabular}{lrr}
\toprule
\textbf{Dataset} & \textbf{Single-Threaded} & \textbf{GPU Acceleration} \\
\midrule
Attrition & 708.6s & 250.0s\\
German    & 668.2s & 28.2s \\
Bank      & $>$1h & 268.9s \\
Adult     & $>$1h & 1248.4s \\
\bottomrule
\end{tabular}
\end{table}

\section{Bias Auditing Experiment --- Extended Analysis} 
\label{app:bias_audit_appendix}

\subsection*{Setup}
\label{subsec:setup_bias}

SOGAR was ran on the Adult Income dataset~\cite{adult_2} using the default setting (32,561 training instances;  3,257 held-out, LightGBM, depth 3, MaxNodes 7, MinLeaf 500), the results of which are depicted in  Table~\ref{tab:results_summary}. Per-group metrics are computed by routing all 19,199 
unfavorably-predicted training instances through every Pareto-front solution tree, assigning each instance the cost and loss of its landed leaf, averaging per group, then plotting across the full front.

\subsection*{Classifier Fairness}
 
The LightGBM classifier exhibits measurable bias at the 
prediction level under two standard criteria, as reported 
in Table~\ref{tab:bias_audit_demo}. Under \textbf{demographic parity}, the classifier assigns 
adverse predictions ($\hat{Y}\!=\!0$) to 86.5\% of females 
versus 55.1\% of males on train (85.6\% vs.\ 55.5\% on 
test), yielding a Disparate Impact Ratio of 0.64 (test: 
0.65) --- well below the $4/5$ 
threshold~\citep{watkins2024four}.
Under \textbf{equality of opportunity}, among individuals who truly earn $>$50K ($Y\!=\!1$), the classifier correctly identifies 90.4\% of males as favorable but only 79.2\% of females, which depicts a true positive rate (TPR) gap of $+0.112$
on train ($+0.118$ on test). A female who deserves a favorable prediction faces a substantially higher probability of being wrongly denied it than an equivalent male. 
Both violations are consistent across train and test 
splits, confirming they reflect the model's learned 
structure.
 
\begin{table}[t!]
\centering
\caption{Classifier fairness metrics (LightGBM, Adult Income,
fold~1).}
\label{tab:bias_audit_demo}
\footnotesize
\setlength{\tabcolsep}{4pt}
\renewcommand{\arraystretch}{0.82}
\setlength{\aboverulesep}{0.25ex}

\setlength{\belowrulesep}{0.25ex}
\begin{tabular}{lcc}
\toprule
\textbf{Metric} & \textbf{Train} & \textbf{Test} \\
\midrule
Accuracy & 0.841 & 0.832 \\
\midrule
\multicolumn{3}{l}{\textit{Demographic parity}} \\
Female $n$ / adverse ($\hat{Y}\!=\!0$) & 9,730 / 8,417 & 1,041 / 891 \\
Male $n$ / adverse ($\hat{Y}\!=\!0$)   & 19,574 / 10,782 & 2,216 / 1,229 \\
Female adverse rate ($\hat{Y}\!=\!0$) & 86.5\% & 85.6\% \\
Male adverse rate ($\hat{Y}\!=\!0$)   & 55.1\% & 55.5\% \\
Disparate Impact Ratio                & \textbf{0.64} & \textbf{0.65} \\
\midrule
\multicolumn{3}{l}{\textit{Equality of opportunity} 
    $P(\hat{Y}\!=\!1 \mid Y\!=\!1)$} \\
Female $n$ ($Y\!=\!1$)          & 1,054 & 125 \\
Male $n$ ($Y\!=\!1$)            & 6,002 & 660 \\
Female true positive rate (TPR) & 0.792 & 0.776 \\
Male true positive rate (TPR)   & 0.904 & 0.894 \\
Gap (Male $-$ Female)           & $+0.112$ & $+0.118$ \\
\bottomrule
\end{tabular}
\end{table}
 
\subsection*{Per-Group Recourse Quality \& Disparity}
 Across all 4,142 Pareto-front solutions, females face consistently worse recourse than males on every metric, 
yet the nature of the disadvantage is asymmetric. The invalidity gap --- $+0.093$ in train data and $+0.081$ in test data --- is driven mostly by loss ($+0.070$) rather than cost ($+0.023$). Note also that the scale of the plots is different as loss reaches 1.0 at the rightmost, while cost reaches a bit above 0.5, so the loss gap can appear less exaggerated than it actually is.
The full per-group breakdown is reported in Table~\ref{tab:bias_audit_app}, with train--test consistency confirming the disparity reflects the classifier's learned structure rather than properties of the recourse solutions.

The two components also behave differently along the front, as shown in Figure~\ref{fig:loss_cost_disparity}. 
The invalidity gap opens wide early, compresses sharply around $I \approx 0.58$--$0.62$ where the female and male curves briefly converge, then widens significantly as the invalidity increases very rapidly for females.
Eventually, invalidity rises for males as well and meets the female invalidity, depicting that in both extremes invalidity marginally converges.
However, it is this late divergence in high cost region of $C \approx 0.62$--$0.9$ that sustains the near-universal disparity across the front.

\begin{table}[t!]
\centering
\caption{Per-group recourse quality across the Pareto front 
(4,142 solutions). Standard deviation is across solutions. 
Gap~$=$~Female~$-$~Male per solution, averaged across the Pareto front.}
\label{tab:bias_audit_app}
\small
\setlength{\tabcolsep}{4pt}
\renewcommand{\arraystretch}{0.82}
\setlength{\aboverulesep}{0.25ex}
\begin{tabular}{llcc}
\toprule
\textbf{Metric} & \textbf{Group / Statistic} 
    & \textbf{Train} & \textbf{Test} \\
\midrule
\multirow{2}{*}{Avg.\ cost}
    & Female & $0.116 \pm 0.143$ & $0.116 \pm 0.145$ \\
    & Male   & $0.094 \pm 0.112$ & $0.097 \pm 0.117$ \\
\multirow{2}{*}{Avg.\ loss}
    & Female & $0.672 \pm 0.285$ & $0.676 \pm 0.286$ \\
    & Male   & $0.602 \pm 0.231$ & $0.614 \pm 0.239$ \\
\multirow{2}{*}{Avg.\ invalidity}
    & Female & $0.788 \pm 0.151$ & $0.792 \pm 0.151$ \\
    & Male   & $0.696 \pm 0.133$ & $0.711 \pm 0.134$ \\
\midrule
\multirow{3}{*}{Mean gap (F$-$M)}
    & Cost       & $+0.023$ & $+0.019$ \\
    & Loss       & $+0.070$ & $+0.062$ \\
    & Invalidity & $+0.093$ & $+0.081$ \\
\bottomrule
\end{tabular}
\end{table}

\subsection*{Pareto Front}

SOGAR audits bias as a by-product of its core output, as it returns a Pareto front of recourse summary trees.
An auditor can route all instances through every solution and compare per-group cost, loss, and invalidity at each operating point. 
A disparity that holds across the entire front, as the 96.4\% worse invalidity for females shows in Table~\ref{tab:bias_audit_app}, cannot be attributed to a particular cost-loss trade-off choice and instead reflects the classifier's decision boundary itself. This front-wide diagnostic is unavailable to methods that return a single recourse summary or a fixed set of counterfactuals.

\end{document}